\newcommand{\myPara}[1]{\vspace{.05in}\noindent\textbf{#1}}
\newtheorem{theorem}{Theorem}
\newtheorem{lemma}{Lemma}
\newtheorem{definition}{Definition}
\newtheorem{assumption}{Assumption}
\def \xx {\bm{x}}
\def \zz {\bm{z}}
  \providecommand\BibTeX{{%
    \normalfont B\kern-0.5em{\scshape i\kern-0.25em b}\kern-0.8em\TeX}}}
\patchcmd{\maketitle}{\@copyrightpermission}{
   \begin{minipage}{0.3\columnwidth}
     \href{https://creativecommons.org/licenses/by/4.0/}{\includegraphics[width=0.90\textwidth]{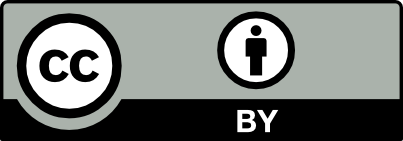}}
   \end{minipage}\hfill
   \begin{minipage}{0.7\columnwidth}
     \href{https://creativecommons.org/licenses/by/4.0/}{This work is licensed under a Creative Commons Attribution International 4.0 License.}
   \end{minipage}
  
   \vspace{5pt}
}{}{}
\begin{document}

%%
%% The "title" command has an optional parameter,
%% allowing the author to define a "short title" to be used in page headers.
\title{Tackling Instance-Dependent Label Noise with Dynamic Distribution Calibration}

\author{Manyi Zhang}
\email{zhang-my21@mails.tsinghua.edu.cn}
\affiliation{
\institution{SIGS, Tsinghua University}
\city{Shenzhen}
\country{China}
}

\author{Yuxin Ren}
\email{ryx20@mails.tsinghua.edu.cn}
\affiliation{
\institution{SIGS, Tsinghua University}
\city{Shenzhen}
\country{China}
}

\author{Zihao Wang}
\email{wangziha21@mails.tsinghua.edu.cn}
\affiliation{
\institution{SIGS, Tsinghua University}
\city{Shenzhen}
\country{China}
}

\author{Chun Yuan}
\authornote{Corresponding author.}
\email{yuanc@sz.tsinghua.edu.cn}
\affiliation{
\institution{SIGS, Tsinghua University}
\institution{Peng Cheng National Laboratory}
\city{Shenzhen}
\country{China}
}

%%
%% By default, the full list of authors will be used in the page
%% headers. Often, this list is too long, and will overlap
%% other information printed in the page headers. This command allows
%% the author to define a more concise list
%% of authors' names for this purpose.

%%
%% The abstract is a short summary of the work to be presented in the
%% article.
\begin{abstract}
 Instance-dependent label noise is realistic but rather challenging, where the label-corruption process depends on instances directly. It causes a severe \textit{distribution shift} between the distributions of training and test data, which impairs the generalization of trained models. Prior works put great effort into tackling the issue. Unfortunately, these works always highly rely on strong assumptions or remain heuristic without theoretical guarantees. In this paper, to address the distribution shift in learning with instance-dependent label noise, a \textit{dynamic distribution-calibration} strategy is adopted. Specifically, we hypothesize that, before training data are corrupted by label noise, each class conforms to a \textit{multivariate Gaussian distribution} at the feature level. Label noise produces outliers to shift the Gaussian distribution. During training, to calibrate the shifted distribution, we propose two methods based on the \textit{mean} and \textit{covariance} of multivariate Gaussian distribution respectively. The mean-based method works in a recursive dimension-reduction manner for robust mean estimation, which is theoretically guaranteed to train a high-quality model against label noise. The covariance-based method works in a distribution disturbance manner, which is experimentally verified to improve the model robustness. We demonstrate the utility and effectiveness of our methods on datasets with synthetic label noise and real-world unknown noise.
 
\end{abstract}

%%
%% The code below is generated by the tool at http://dl.acm.org/ccs.cfm.
%% Please copy and paste the code instead of the example below.
%%
\begin{CCSXML}
<ccs2012>
   <concept>
       <concept_id>10010147.10010178.10010224</concept_id>
       <concept_desc>Computing methodologies~Computer vision; Machine learning</concept_desc>
       <concept_significance>500</concept_significance>
       </concept>
 </ccs2012>
\end{CCSXML}
\ccsdesc[500]{Computing methodologies~Computer vision; Machine learning}

%%
%% Keywords. The author(s) should pick words that accurately describe
%% the work being presented. Separate the keywords with commas.
\keywords{Instance-dependent label noise, distribution shift, distribution calibration, robustness}

%% A "teaser" image appears between the author and affiliation
%% information and the body of the document, and typically spans the
%% page.

%%
%% This command processes the author and affiliation and title
%% information and builds the first part of the formatted document.
\maketitle

\section{Introduction}\label{sec:1}
Learning with label noise is one of the hottest topics in weakly-supervised learning \cite{han2020sigua,lukasik2020does,collier2021correlated}. In real life, large-scale datasets are likely to contain label noise. The main reason is that manual high-quality labeling is expensive \cite{northcutt2021confident,ortego2021multi,wu2021ngc}. Large-scale datasets are always collected from crowdsourcing platforms \cite{li2017learning} or crawled from the internet \cite{yao2020searching}, which inevitably introduces label noise.
\begin{figure}[!t]
\centering
\includegraphics[width=0.44\textwidth]{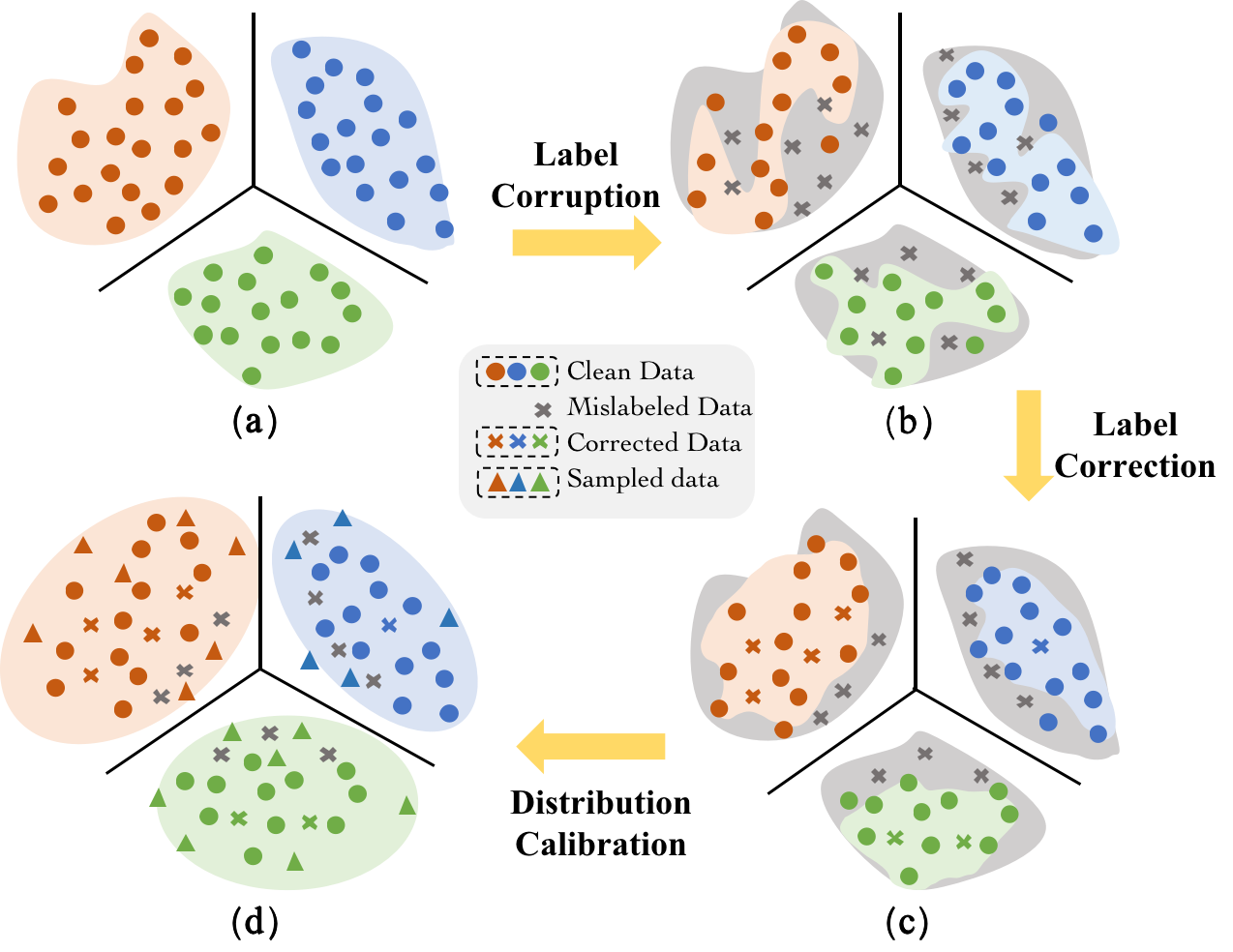} 
\caption{The illustrations of the distribution-shift problem and our solution. The background color areas represent the data distributions of different classes. (a) This represents clean data and ground-truth data distributions. (b) Due to label corruption, partial data are mislabeled, resulting in noisy labels. The distributions trained on data with noisy labels mismatch to ground-truth distributions. (c) Label correction mainly corrects data far from decision boundaries. However, the mislabeled data near decision boundaries are still not corrected, making learned distributions biased. (d) By using our distribution calibration, the learned distributions are closer to ground-truth distributions, naturally following better generalization abilities.}
\vspace{-10pt}
\label{fig:comparison} 
\end{figure}

Instance-dependent label noise \cite{cheng2017learning,xia2020part,cheng2020learning} is \textit{more realistic} and \textit{applicable} than instance-independent label noise, where the label-flipping process depends on instances/features directly. It is because, in real-world scenes, an instance whose features contain less discriminative information or are of poorer quality may be more likely to be mislabeled. Instance-dependent label noise is more challenging owing to its inherent complexity~\cite{zhu2021second}. Compared with instance-independent label noise, it leads to a more severe \textit{distribution shift} problem for trained models \cite{berthon2021idn}. That is to say, this kind of noise makes the distributions of training and test data \textit{significantly different}. If the models are trained with instance-dependent label noise, it is pessimistic that they would generalize poorly on test data~\cite{xia2021sample}. 

The recent methods on handling instance-dependent label noise are generally divided into two main
categories. The first one is to estimate the instance-dependent transition matrix \cite{xia2020part,jiang2022an,xia2022extended}, which tends to characterize the behaviors of clean labels flipping into noisy labels. However, these methods are limited to the case with a small number of classes \cite{zhu2021second,xia2019anchor}. Besides, they highly rely on strong assumptions to achieve an accurate estimation, \textit{e.g.}, the assumptions on anchor points, bounded noise rates, and extra trusted data. It is hard or even infeasible to check these assumptions, which hinders the validity of these methods~\cite{jiang2022an}. The second one tends to heuristically identify clean data based on the memorization effect of deep models \cite{arpit2017closer} for subsequent operations, \textit{e.g.}, label correction~\cite{tanaka2018joint,zheng2020error,tanaka2018joint}. Unfortunately, due to the complexity of instance-dependent label noise, label correction will be much weaker in the noisy region near the decision boundary. Therefore, the labels corrected by the current predictions would likely be erroneous~\cite{berthon2021idn}. In addition, the training data in identified clean regions in this way is relatively monotonous~\cite{xia2021sample,pleiss2020identifying}. The corresponding distribution will be restricted to a small region of the global distribution, which introduces covariate shift\footnote{Covariate shift is a subclass of distribution shift. It is when the distribution of instances shifts between training and test environments. Although the instance distribution may change, the labels remain the same. }~\cite{sugiyama2012machine}. We detail the above issues in Figure~\ref{fig:comparison}. Therefore, the methods belonging to both two categories cannot well handle the distribution shift brought by instance-dependent label noise. 

In this paper, to address the above issues, we propose a dynamic distribution-calibration strategy. We first assume that, before training data are corrupted by label noise, each class conforms to a \textit{multivariate Gaussian distribution} at the feature level. Such an assumption is more reasonable and has been verified in lots of works, such as~\cite{hernandez2014mind,kendall2017uncertainties}. Then, two methods based on the mean and covariance of multivariate Gaussian distributions are proposed. Specifically, the mean-based method works in a \textit{recursive dimension-reduction} manner, which is theoretically guaranteed to 
train a high-quality model against instance-dependent label noise.  
It first assigns smaller weights to outliers and then divides the whole feature space into \textit{clean space} and \textit{corrupted space} where the contamination has larger effects on corrupted space. We recurse the computation on corrupted space for robust mean estimation.
The covariance-based method works in a \textit{distribution disturbance} manner, which is experimentally verified. It introduces an interference in the empirical covariance of given data. In this way, we can increase the diversity of training data to mitigate the distribution shift and improve generalization. After achieving multivariate Gaussian distributions for all classes, we sample examples from them for training, which calibrates the shifted distributions. 

We conduct extensive experiments across various settings on \textit{CIFAR-10}, \textit{CIFAR-100}, \textit{WebVision}, and \textit{Clothing1M}. The results consistently exhibit substantial performance improvements compared to state-of-the-art methods, which support our claims well.

\myPara{Organization.} The rest of this paper is organized as follows. In Section~\ref{sec:2}, we introduce some background knowledge. In Section~\ref{sec:3}, we present our methods step by step, with theoretical justifications. In Section~\ref{sec:4}, empirical evaluations are provided. In Section~\ref{sec:5}, we summarize this paper. 

\begin{figure}[!tp]
\centering
\includegraphics[width=0.4\textwidth]{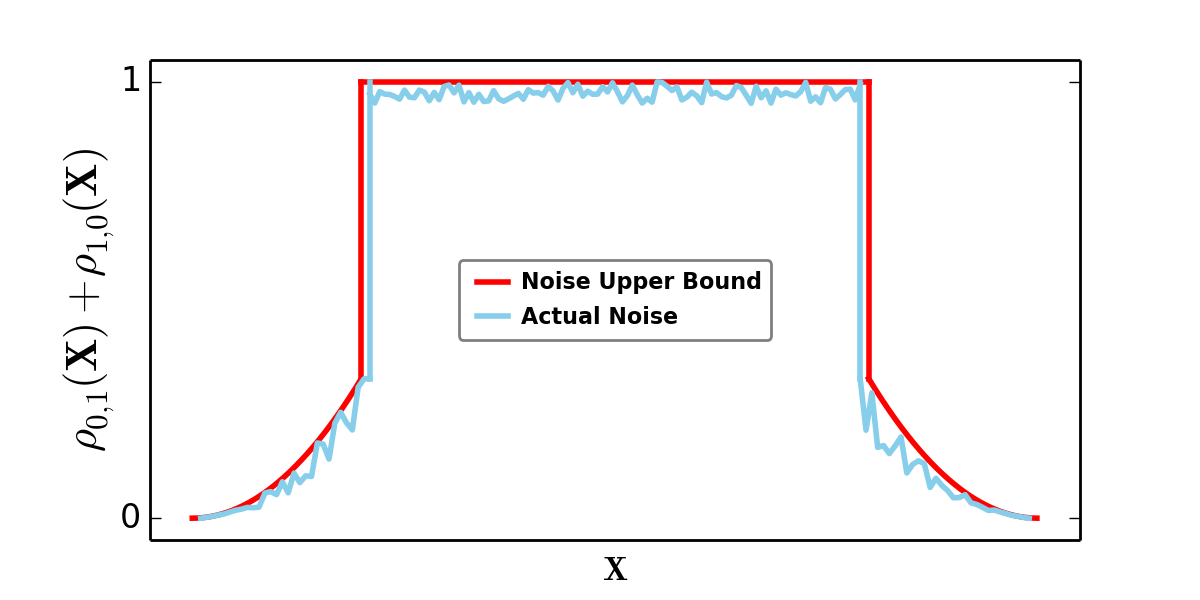} 
\caption{The noise level curve of PMD noise. The red curve represents the upper bound. The noise level is bounded by $\rho$ in the restricted region while unbounded in the unrestricted region. The blue curve shows the actual noise level.}
\label{fig:PMD_curve}
\vspace{-5pt}
\end{figure}
\section{Preliminaries}\label{sec:2}
\myPara{Notations.} Let $\mathbbm{I}[\mathcal{A}]$ be the indicator of the event $\mathcal{A}$. Let $[z]=\{0,\ldots,z-1\}$. Besides, $|\mathcal{B}|$ denotes the total number of elements in the set $\mathcal{B}$. 

\myPara{Problem setup.} We consider a $k$-classification problem ($k\geq 2$). Let $\mathcal{X}$ and $\mathcal{Y}=[k]$ be the instance and class label spaces respectively. We assume the dataset $\{(\xx_i,y_i)\}_{i=1}^n$ is sampled from the underlying joint distribution $\mathcal{D}$ over $\mathcal{X}\times\mathcal{Y}$, where $n$ is the sample size. Before observation, partial clean labels are flipped due to instance-dependent label noise. As a result, we are provided with a noisy training dataset $\{(\xx_i,\tilde{y}_i)\}_{i=1}^n$ obtained from a noisy joint distribution $\tilde{\mathcal{D}}$ over $\mathcal{X}\times\mathcal{Y}$. For each instance $\xx_i$, its label $\tilde{y}_i$ may be incorrect. Our goal is to learn a robust classifier by \textit{only} exploiting the noisy dataset, which can assign clean labels to test data precisely. 

\myPara{Label noise model.} In this paper, we consider \textit{polynomial-margin diminishing noise} (\textbf{PMD} noise)~\cite{zhang2021learningwith}. The PMD noise is one of the instance-dependent label noise, which is realistic but rather challenging. 
Although the noise setting and analyses naturally generalize to the multi-class case ($k>2$), for simplicity, we first focus on the binary case ($k=2$) to improve legibility. Specifically, let 
$\eta(\xx)=\mathbbm{P}(y=1|\xx)$ be the clean class posterior. Let $\rho_{0,1}(\xx)=\mathbbm{P}[\tilde{y}=1|y=0,\xx]$ and $\rho_{1,0}(\xx)=\mathbbm{P}[\tilde{y}=0|y=1,\xx]$ be the noise functions where $\tilde{y}$ denotes the noisy label. For example, an instance $\xx$ has the true label $y=0$, the noisy label is flipped to $1$ with the  probability of $\rho_{0,1}(\xx)$. We give the formal definition of PMD noise as follows. 
\begin{definition}[PMD noise]
The noise functions $\rho_{0,1}(\xx)$ and $\rho_{1,0}(\xx)$ are PMD, if there are constants $0<t_0<\frac{1}{2}$ and $c_1,c_2>0$ such as 
\begin{align}
   &\rho_{0,1}(\xx)\leq c_1\eta(\xx)^{1+c_2};\forall\eta(\xx)\leq\frac{1}{2}-t_0; \\
    &\rho_{1,0}(\xx)\leq c_1[1-\eta(\xx)]^{1+c_2};\forall\eta(\xx)\geq\frac{1}{2}+t_0. 
\end{align}

\end{definition}
We denote $t_0$ as the margin of $\rho$. In the restricted region $\{\xx:|\eta(\xx)-\frac{1}{2}|>t_0\}$, the PMD noise requires the \textit{upper bound} on $\rho$ to be polynomial and monotonically decreasing~\cite{zhang2021learningwith}. Meanwhile, in the unrestricted region $\{\xx:|\eta(\xx)-\frac{1}{2}|<t_0\}$, both $\rho_{0,1}$ and $\rho_{1,0}$ can be arbitrary. For a clearer understanding, Figure~\ref{fig:PMD_curve} provides the noise level curve of the PMD noise. The PMD noise is consistent with real-world noisy scenarios: data near the decision boundary is hard to distinguish and likely to be mislabeled so the noise level can be arbitrary. Meanwhile, data far away from the decision boundary owns typical features of its class so the noise level should be bounded. 

\section{Methodology}\label{sec:3}
In this section, we present our methods step by step. We first show how to extract the deep features of instances based on label correction and how to model deep features (Section \ref{sec:3.1}). Then, we detail the mean-based method (Section \ref{sec:3.2}) and covariance-based method (Section \ref{sec:3.3}). Lastly, the theoretical analysis is provided to justify our claims (Section \ref{sec:3.4}).  

\subsection{Feature Extraction and Label Correction}\label{sec:3.1}
Given a training example $(\xx,\tilde{y})$, we employ a deep network to finish the encoding and classification for it. In more detail, first, the instance $\xx$ is fed into the deep network to obtain its deep features, which are denoted by $h(\xx)\in\mathbbm{R}^d$. Then, we map the deep features to the class label space with the classifier $f(h(\xx))$. 
% and measure the difference between $f(h(\xx))\in\mathbbm{R}^k$ and $\tilde{y}$. 
For simplicity, we use $f(\xx)$ to substitute $f(h(\xx))$. In binary classification, $f(x):\mathcal{X}\rightarrow[0,1]$. Let $y_{f(\xx)}:= \mathbbm{I}(f(\xx)\geq \frac{1}{2}) $ be the label predicted by $f$. 

In this paper, we build our methods based on label correction~\cite{zhang2021learningwith}, mainly considering its sample-efficiency~\cite{tanaka2018joint}. Specifically, we first warm up the deep network within a few epochs. The warm-up phase is used to initialize the model for the following operations. Then, we correct the label, where $f$ has high confidence in it. With a threshold $\tau$, if $|f(\xx)-\frac{1}{2}|>\tau$, we flip $\tilde{y}$ to the prediction $y_{f(\xx)}$ of $f$. We repeatedly correct labels and improve the network within the epoch, until no label can be corrected. In the next epoch, we sightly relax the threshold $\tau$ for label correction. The above procedure can be easily extended to the \textit{multi-class} scenario (see Appendix B for more details).  

Recall that we assume that before training data
are corrupted by label noise, the features of each class conform to a multivariate Gaussian distribution. That is, for $k$ classes, we define $k$ multivariate Gaussian distributions in total. The distribution for the $c$-th class is denoted by $\mathcal{N}_c(h(\xx)|\bm{\mu}^c,\bm{\Sigma}^c)$, where $\bm{\mu}^c \in \mathbbm{R}^d$ and $\bm{\Sigma}^c \in \mathbbm{R}^{d\times d}$.

Note that when all deep feature distributions are ideal, \textit{i.e.,} they are not corrupted by label noise,  the \textit{empirical mean} $\bar{\bm{\mu}}^c$ is well-known to be an optimal estimator of the true mean $\bm{\mu}^c$ at most $\mathcal{O}(\sqrt{d/n_c)}$, where $n_c$ is the number of examples belonging to the $c$-th class. The existence of label noise makes the empirical mean fail: even a single corrupted data can arbitrarily mislead the mean estimation. However, as discussed, the label correction approach cannot handle distribution shift caused by instance-dependent label noise in two aspects: (1) The labels corrected by the current predictions would likely be erroneous; (2) The training data in identified clean regions in this way is relatively monotonous, leading to covariate shift. The two issues motivate us to explore more robust solutions.

\subsection{Mean-Based Method}\label{sec:3.2}
Based on the Huber's contamination model~\cite{huber1992robust} and our assumption, we introduce the algorithm \textit{AgnosticMean}~\cite{lai2016agnostic}. In summary, the algorithm consists of two steps: (1) the outlier damping step; (2) the projection step. In the outlier damping step,  we assign different weights to each data point where outliers will get smaller weights. In the projection step, we project data points onto the span (denoted as $\mathcal{V}$) of the top $\frac{d}{2}$ principal components. Two steps are \textit{alternately implemented} to achieve robust mean estimation, which mitigates the side effect of instance-dependent label noise. In the following, we give mathematical descriptions of our mean-based method. 

Given the noisy dataset $\{(\xx_i,\tilde{y}_i)\}_{i=1}^n$, after label correction at the $t$-th epoch, we obtain the dataset $\{(\xx_i,\tilde{y}_i^t)\}_{i=1}^n$. For building the multivariate Gaussian distribution on deep features $h(\xx)$, we take instances whose labels are $c$ as examples, the rest classes are the same procedure. 
We group these instances into a set $\mathcal{S}$.
Due to the corruption caused by label noise, we have $\mathcal{S} = \mathcal{S}_\mathcal{N} \cup \mathcal{S}_\mathcal{Q}$, where $\mathcal{S}_\mathcal{N}$ and $\mathcal{S}_\mathcal{Q}$ include the instances sampled from the underlying multivariate Gaussian distribution and arbitrary noise distribution respectively. If the noise rate is $\epsilon$, we have $|\mathcal{S}_\mathcal{Q}| = \epsilon|\mathcal{S}|$. We assume that the underlying distribution is $\mathcal{N}(h(\xx)|\bm{\mu},\sigma^2\bm{I})$. Note that the assumption on the covariance is practical in statistical/machine learning and related tasks (c.f.~ \cite{kingma2013auto,kingma2016improved,diakonikolas2019recent}). The notations $\bm{\mu}_\mathcal{S}$ (\textit{resp}. $\bm{\mu}_\mathcal{Q}$) and $\bm{\Sigma}_{\mathcal{S}}$ (\textit{resp}. $\bm{\Sigma}_{\mathcal{Q}}$) denote the mean and covariance of the set $\mathcal{S}$ (\textit{resp}. $\mathcal{S}_{\mathcal{Q}}$). We then have 
\begin{equation}
    \bm{\Sigma}_{\mathcal{S}}=(1-\epsilon)\sigma^2\bm{I}+\epsilon\bm{\Sigma}_{\mathcal{Q}}+\epsilon(1-\epsilon)(\bm{\mu}_\mathcal{\mathcal{S}}-\bm{\mu}_\mathcal{Q})(\bm{\mu}_\mathcal{S}-\bm{\mu}_\mathcal{Q})^\top.
\end{equation}
Our main goal is to find the mean shift, \textit{i.e.,} $\bm{\mu}_\mathcal{S}-\bm{\mu}$. The rationality behind this is that if outliers substantially move $\bm{\mu}_\mathcal{S}$ far away from $\bm{\mu}$, it must move $\bm{\mu}_\mathcal{S}$ in the direction of $\bm{\mu}_\mathcal{S}-\bm{\mu}$. The outlier damping step can effectively limit the effect of outliers. The projection step contributes to find the direction of the mean shift.

\myPara{Outlier damping step.} In this step, we impose different weights on different instances. Specifically, the instances that are far away from the \textit{coordinate-wise median} are endowed with smaller weights. Here, the coordinate-wise median simply takes the median in each dimension. Subsequently, we use $\bm{m}$ to denote the coordinate-wise median of $\mathcal{S}$. Let $s^2=C\textsf{Tr}(\bm{\Sigma}_{\mathcal{S}})$, where $C$ is a constant, $\textsf{Tr}(\bm{\Sigma}_{\mathcal{S}})$ is the trace of $\bm{\Sigma}_{\mathcal{S}}$. The weight of each instance can be obtained by the following equation:
\begin{equation}\label{eq:weights}
    \omega_i = \exp(-\frac{||h(\xx_i)-\bm{m}||_2^2}{s^2}),  \xx_i \in \mathcal{S}.
\end{equation}
After obtaining different weights of all instances, we reset the covariance of the set $\mathcal{S}$ with the weights. That is, 
\begin{equation}\label{eq:weighted_covariance}
    \bm{\Sigma}_{\mathcal{S},\bm{\omega}}=\sum\omega_i\frac{(h(\xx_i)-\bm{\mu}_{\mathcal{S},\bm{\omega}})(h(\xx_i)-\bm{\mu}_{\mathcal{S},\bm{\omega}})^\top}{n_c},
\end{equation}
where $\bm{\mu}_{\mathcal{S},\bm{\omega}}=\sum\omega_i\frac{h(\xx_i)}{n_c}$ and $n_c$ denotes the number of examples belonging to the $c$-th class. 

\myPara{Projection step.}  
With the aim of containing a large projection of $\bm{\mu}_\mathcal{S}-\bm{\mu}$, we project $\mathcal{S}$ onto the span $\mathcal{V}$ of the top $\frac{d}{2}$ principal components of $\bm{\Sigma}_\mathcal{S, \bm{\omega}}$. For the span of the bottom $\frac{d}{2}$ principal components (denoted as $\mathcal{W}$).        
 $\mathcal{P}_\mathcal{W}\bm{\mu}$ is just approximated by $\mathcal{P}_\mathcal{W}\bm{\mu}_\mathcal{S}$, where $\mathcal{P}_\mathcal{W}$ represents the operation projecting to the space $\mathcal{W}$. After determining the projection of $\bm{\mu}$ in space $\mathcal{W}$, we still need to ensure the mean in space $\mathcal{V}$. We recursively perform the outlier damping step and projection step in $\mathcal{V}$. To the end, there will be only a one-dimensional space which is treated as the direction of $\bm{\mu}_\mathcal{S}-\bm{\mu}$. Note that when the dimension of deep features is 1, the median of $\mathcal{S}$ is known to be a robust estimator of the mean with the error of $\mathcal{O}(\epsilon\sigma)$~\cite{huber2004robust,hampel2011robust}.  So we just compute the median as the robust mean estimator.

Figure~\ref{fig:projection} illutrates the effectiveness of the mean-based method. After the outlier damping step, outliers are assigned with smaller weights. The mean shift $\bm{\mu}_\mathcal{S}-\bm{\mu}$ onto the space $\mathcal{W}$ has very small projection, which implies that the contribution of outliers in space $\mathcal{W}$ on average cancels out and the sample mean leads to a small error. Therefore, we just need to perform the outlier damping step and projection step in space $\mathcal{V}$ at the next procedure until there is only one dimension.    
The procedure of the algorithm \textit{AgnosticMean} is presented in Algorithm~\ref{alg:robust_mean}. Emphatically, Steps~4-6 belong to outlier damping. Step~7 belongs to projection. In Step~8, the outlier damping and projection steps are performed recursively, until there will be only a one-dimensional space. The procedure of \textit{AgnosticMean} is executed for any $c-$th class to obtain its robust mean $\hat{\bm{\mu}}^c$. We will provide theoretical insights for the mean-based method in Section~\ref{sec:3.4}.

\begin{algorithm}[!t]
    \caption{\textit{AgnosticMean}}
    \label{alg:robust_mean}
    \begin{algorithmic}[1]
        \REQUIRE The set $\mathcal{S}=\{h(\xx_1),\ldots,h(\xx_{n_c})\}$ with $h(\xx_{i})\in\mathbbm{R}^d$.
        \IF{$d = 1$ }
            \RETURN \textsf{median}($\mathcal{S}$).
        \ENDIF
        % \STATE \textbf{Set} $\bm{m}$ as the coordinate-wise median of $\mathcal{S}$; 
        \STATE \textbf{Let} $s^2 = C \textsf{Tr}(\bm{\Sigma}_\mathcal{S})$;
        \STATE \textbf{Compute} the weight $\omega_i$ for every $h(\bm{x_i})\in\mathcal{S}$ with Eq.~(\ref{eq:weights}), $\bm{\omega}=(\omega_1, ..., \omega_{n_c})$;
        \STATE \textbf{Reset} the covariance of the set $\mathcal{S}$ as  $\bm{\Sigma}_{\mathcal{S},\bm{\omega}}$ with Eq.~(\ref{eq:weighted_covariance});
        \STATE \textbf{Project} $\mathcal{S}$ onto the span $\mathcal{V}$ and $\mathcal{W}$ to obtain $\mathcal{P}_\mathcal{V}\mathcal{S}$ and $\mathcal{P}_\mathcal{W}\mathcal{S}$, $\mathcal{P}_\mathcal{V}\mathcal{S}\in  \mathbbm{R}^\frac{d}{2}$, $\mathcal{P}_\mathcal{W}\mathcal{S}\in  \mathbbm{R}^\frac{d}{2}$;
        \STATE \textbf{Set} $\bm{\hat{\mu}}_\mathcal{V}$ = \textit{AgnosticMean}($\mathcal{P}_\mathcal{V}\mathcal{S}$) recursively and $\bm{\hat{\mu}}_\mathcal{W}$ = \textsf{mean}($\mathcal{P}_\mathcal{W}\mathcal{S}$);
        \STATE \textbf{Obtain} $\bm{\hat{\mu}} \in \mathbbm{R}^d$, where $\mathcal{P}_\mathcal{V} \bm{\hat{\mu}} = \bm{\hat{\mu}}_\mathcal{V}$ and $\mathcal{P}_\mathcal{W} \bm{\hat{\mu}} = \bm{\hat{\mu}}_\mathcal{W}$.
        %\STATE \textbf{Output} $\bm{\hat{\mu}}$
    \end{algorithmic}
\end{algorithm}
\begin{figure}[!t]
\centering
\includegraphics[width=0.21\textwidth]{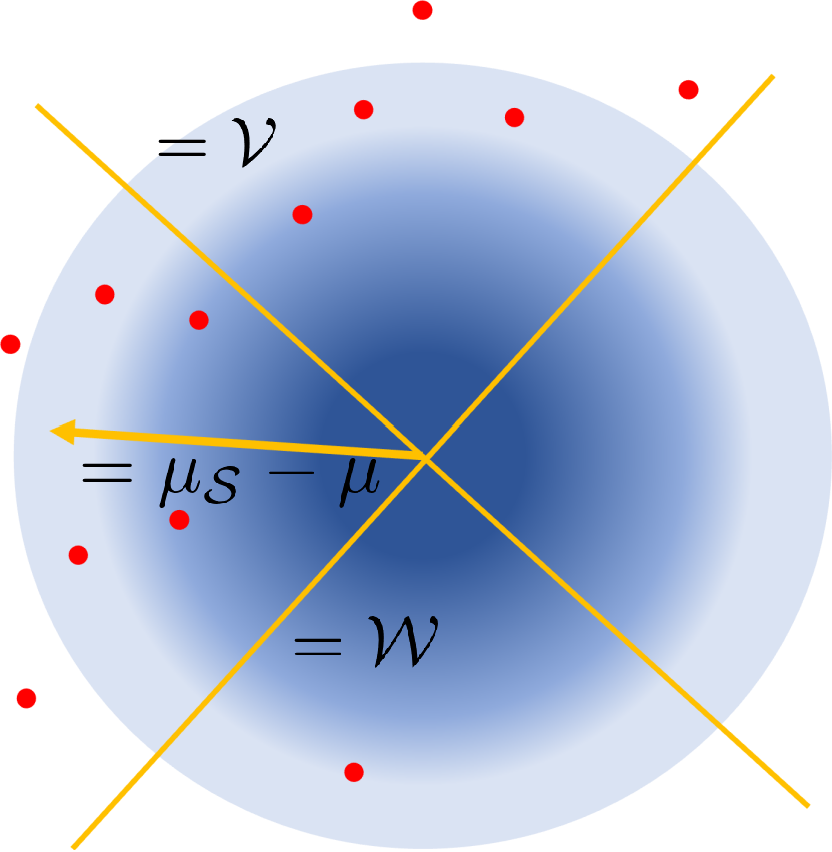} 

\caption{The illustration of the mean-based method. The circular area is the data space where the color depth degree indicates the weights of data. The points in the darker region will be assigned larger weights. The red points are outliers and the orange arrow represents the direction of the mean shift.}
\vspace{-10pt}
\label{fig:projection}
\end{figure}

\subsection{Covariance-Based Method}\label{sec:3.3}
As discussed, if we exploit the memorization effect of deep networks~\cite{arpit2017closer}, the training data in identified clean regions in this way is relatively monotonous, leading to covariate shift. In other words, the training data in identified clean regions are sampled from a biased distribution and cannot cover the underlying distribution well~\cite{wang2019implicit,li2022uncertainty,yang2021free}. 

To relieve the above issue, we add some disturbance to the empirical covariance of the biased distribution. Intuitively, in this way, we can increase the diversity of training data to mitigate the overfitting to the biased distribution. Specifically, for any $c$-th class, we first compute the mean and covariance of the corresponding distribution of the $c$-th class, \textit{i.e.}, 
\begin{align} \label{eq:sample_estimator}
&\hat {\bm{\mu}}  =  \sum \frac{h(\bm{x}_i)}{n_c},\notag\\
&\hat{\bm{\Sigma}} = \sum\frac{\left(h(\bm{x}_i) - \hat {\bm{\mu}}\right) \left(h(\bm{x}_i) - \hat {\bm{\mu}}\right)^\top}{n_c}.
\end{align}
Then, we add the disturbance to the covariance $\hat{\bm{\Sigma}}$, which can expand the features' regions, rendering the model gains robustness against covariate shift. That is, 

\begin{align} \label{eq:sample_estimator_dis}
\hat{\bm{\Sigma}}^* = \sum\frac{\left(h(\bm{x}_i) - \hat {\bm{\mu}}\right) \left(h(\bm{x}_i) - \hat {\bm{\mu}}\right)^\top}{n_c}+\alpha\mathbf{1},
\end{align}
where $\alpha\in\mathbbm{R}^+$ is a hyperparameter that controls the degree of the disturbance, and $\mathbf{1}\in\mathbbm{R}^{d\times d}$ is a matrix of ones. Note that it is rather hard to set an optimal $\alpha$ from a theoretical view. It is because, without strong assumptions, the determination of the optimal covariance is an open problem in robust statistics. Therefore, in this paper, we determine the value of $\alpha$ by simply employing a (noisy) validation set. Experimental results prove the feasibility of this way. 

\myPara{Algorithm flow.} We present the overall paradigm of our methods in Algorithm \ref{alg:main}. For the setting of $\tau(t)$, we follow the setting in \cite{zhang2021learningwith}. In summary, we first use warm-up training to make the model predictions more reliable (Step~1). Then, in each epoch, we exploit label correction to obtain a corrected dataset (Steps~3-7) for the following extraction of deep features (Step~8). Next, we perform the proposed two methods (described in Section \ref{sec:3.2} and \ref{sec:3.3}) to build multivariate Gaussian distributions for all classes (Steps~9-11). At last, extra data points are sampled from the built multivariate Gaussian distributions for training, which finishes the distribution calibration to improve generalization (Steps~12-15). In this work, we name our methods with label correction as \textbf{m}ean-based \textbf{d}ynamic \textbf{d}istribution \textbf{c}alibration (MDDC) and \textbf{c}ovariance-based \textbf{d}ynamic \textbf{d}istribution \textbf{c}alibration (CDDC) respectively. 
\begin{algorithm}[!t]
    \caption{The algorithm flow of our methods}
    \label{alg:main}
    \begin{algorithmic}[1]
        \REQUIRE The noisy dataset $\{(\xx_i,\tilde{y}_i)\}_{i=1}^n$, the maximum of the training epoch $t_{\max}$, the number of warm up $t_{w}$, the threshold $\tau(t)$, the initial classifier $f$, and the sampled fraction $\lambda$. 
        \STATE \textbf{Warm} up the classifier $f$ on $\{(\xx_i,\tilde{y}_i)\}_{i=1}^n$ within $t_w$ epochs;
        \FOR{ $t = t_w+1, ..., t_{max}$}
            % \STATE $S \leftarrow \emptyset$
                \FORALL{$(\xx_i,\tilde{y}_i^{t-1})\in \{(\xx_i,\tilde{y}_i^{t-1})\}_{i=1}^n$ and $|f(\xx_i)-\frac{1}{2}|\geq\tau(t)$}
                    \STATE $\tilde{y}_i^t \leftarrow \mathbbm{1}_{\{f(\xx_i)\geq \frac{1}{2}\}}$\hfill //label correction;
                \ENDFOR
                \STATE \textbf{Obtain} the corrected dataset $\{(\xx_i,\tilde{y}_i^{t})\}_{i=1}^n$;
                \STATE \textbf{Train} $f$ on the corrected dataset;
                \STATE \textbf{Extract} the deep features of all instances, \textit{i.e.}, $h(\xx_i)$;
                \FOR{$c$ = 1, ..., $k$}
                    \STATE \textbf{Build} multivariate Gaussian distributions for the $c$-th class using the mean-based or covariance-based method; 
                \ENDFOR
                \STATE \textbf{Sample} extra examples from all $k$ multivariate Gaussian distributions with the number of $\lambda n$\hfill //data sampling;
                \STATE \textbf{Train} $f$ on the sampled examples;
        \ENDFOR
        \RETURN Trained classifier $f$.
    \end{algorithmic}
\end{algorithm}

\subsection{Theoretical Analysis}\label{sec:3.4}
We provide theoretical analyses of MDDC based on PLC~\cite{zhang2021learningwith}, which shows that a classifier trained with our strategy converges to be consistent with the Bayes optimal classifier with high probability. In this work, we focus on the asymptotic case~\cite{mohri2018foundations}.

\myPara{Pre-knowledge.} For completeness, we first reproduce the relevant background knowledge. 
\begin{definition}[$(\beta, v)$-consistency~\cite{zhang2021learningwith}]\label{def:consistency_beta_v}
Suppose that a set of data $(\bm{x},\tilde{y})$ are sampled from $\tilde{\mathcal{D}}(\bm{x},\tilde{\eta}(\bm{x}))$, where $\tilde{\eta}(\bm{x})$ outputs the noisy posterior probability for $\bm{x}$. Given a hypothesis class $ \mathcal{M}$ with sufficient examples, and $f(\xx)= \mathop{\arg\min}\limits_{m\in \mathcal{M}}  \mathbb{E}_{(\xx,\tilde{y})\sim\tilde{\mathcal{D}}(\bm{x},\tilde{\eta}(\bm{x}))}Loss(m(x),\tilde{y})$, we define $\mathcal{M}$ is $(\beta, v)$-consistency if
\begin{align}\label{eq:consistency_beta_v}
    &|f(\xx)-\eta(\xx)|\leq\\
    &\beta\mathbbm{E}_{(\zz, \tilde{y})\sim\tilde{\mathcal{D}}(\zz,\tilde{\eta}(\zz))}\big[\mathbbm{1}_{\{ \tilde{y}_{\zz}\neq\eta^*(\zz)\}}(\zz)\big||\eta(\zz)-\frac{1}{2}|\geq|\eta(\xx)-\frac{1}{2}|\big]+v \notag,
\end{align}
where $\eta(\zz)$ outputs the clean posterior probability for $\zz$ and $\eta^*(\zz)$ is the Bayes optimal classifier for $\zz$. 
\end{definition}
Definition~\ref{def:consistency_beta_v} measures the inconsistency between $f(\cdot)$ and $\eta(\cdot)$. We can approximate the error of the classifier $f$ at $\xx$ by computing the risk of $\eta^*(\cdot)$ at $\bm{z}$ where $\eta^*(\zz)$ are more confident than $\eta^*(\xx)$.

\begin{definition}[$c_*,c^*)$-bounded distribution~\cite{zhang2021learningwith}]
Denote the cumulative distribution function of $|\eta(\xx)-\frac{1}{2}|$ as $R(\cdot)$. For a random variable $o$, $R(o)= \mathbbm{P}(|\eta(\xx)-\frac{1}{2}|\leq o)$ and the corresponding probability density function is $r(o)$. We define the distribution $\mathcal{D}$ is ($c_*,c^*)$-bounded if $0< c_*\leq r(o) \leq c^*$ for all $0\leq o \leq \frac{1}{2}$. The worst-case density-imbalance ratio of $\mathcal{D}$ is denoted by $\ell:=\frac{c^*}{c_*}$.
\end{definition}

The bounded condition enforces the continuity of the density function. The continuity allows one example to borrow useful information from its (clean) neighborhood region, which can help handle mislabeled data. 

\begin{definition}[Pure ($\tau, f, \eta$)-level set]
We say a set $S(\tau, \eta) :=\big \{\xx \big||\eta(\xx)-\frac{1}{2}|\geq\tau\big \}$ is pure for the classifier $f$, if $y_{f(\xx)}=\eta^*(\xx)$ for all $\xx\in S(\tau, \eta)$, where $y_{f(\xx)}$ denotes the label
predicted by $f$, \textit{i.e.}, $y_{f(\xx)}=\mathbbm{1}_{\{f(\xx)\geq\frac{1}{2}\}}$.
\end{definition}
The above definition indicates that the classifier $f$ has sufficient discriminating ability in a specific region.  

\myPara{Assumption.} We present the following assumption for theoretical results.
% Note that the assumption is weaker than the assumptions in prior works (see~\cite{zheng2020error,zhang2021learningwith}). 

\begin{assumption}
    We assume that the given hypothesis class $\mathcal{M}$ is $(\beta, v)$-consistency, and the underlying distribution $\mathcal{D}$ satisfied the $(c^*, c_*)$-bounded condition. 
\end{assumption}

\myPara{Theoretical results.} Based on the above pre-knowledge and assumption, we formally present our theoretical results. 

\begin{lemma}[One round purity improvement]\label{lem:one_round}
Suppose that the classifier $f$ owns a pure ($\tau, f, \eta$)-level set where $3\xi v \leq \tau<\frac{1}{2}$, where $\xi<1$ is a constant that depends on the PMD noise. After one round label correction (described in Section~\ref{sec:3.1}) and one round sampling from estimated Gaussian distributions (described in Sections~\ref{sec:3.2} and~\ref{sec:3.3}), the training set consists of two parts: the corrected set ($\xx$, $\tilde{y}_{new})$ and the sampling set ($\xx$, $\tilde{y}_{\mathcal{G}}$). Training on these data, $f$ owns a improved pure ($\tau_{new}, f, \eta$)-level set where $\frac{1}{2}-\tau_{new}\geq (1+\frac{\xi v}{\beta \ell})(\frac{1}{2}-\tau) $. 
\end{lemma}
\begin{theorem}\label{thm:main}
    %Let $\tau_0= max\{t_0,\frac{\beta\theta}{1+\2\beta}\}$. 
    After training on the noisy dataset, the proposed method will return the trained classifier $f$ such that
\begin{equation}
    \mathbb{P}_{\xx\sim\tilde{\mathcal{D}}}[y_{f(\xx)}=\eta^*(\xx)]\geq 1-3\xi c^*v.
\end{equation}
\end{theorem} 
The proofs are provided in Appendix A. Lemma~\ref{lem:one_round} tells us that the cleansed region will be enlarged by at least a constant factor after one training round. Theorem~\ref{thm:main} states that the classifier trained with our method converges to be
consistent with the Bayes optimal classifier with high probability. Note that since $\xi<1$, $1-3\xi c^*v>1-3c^*v$ holds, which indicates our method can help the classifier converge to be
consistent with the Bayes optimal classifier with a higher probability than PLC~\cite{zhang2021learningwith}. The theoretical results verify the effectiveness of our method well. In the next section, we use comprehensive experiments to support our claims.

\section{Experiments}\label{sec:4}
In this section, we first introduce the used baselines in this paper (Section~\ref{sec:4.1}). The evaluations on synthetic and real-world label-noise datasets are then provided (Section~\ref{sec:4.2} and \ref{sec:4.3}). Finally, a comprehensive ablation study is provided (Section~\ref{sec:4.4}).

\subsection{Baselines}\label{sec:4.1}
In this paper, we compare our methods with five representative baselines and implement all methods with default parameters by PyTorch. The baselines include: (i). Standard; (ii). Co-teaching+~\cite{yu2019does}; (iii). GCE~\cite{zhang2018generalized}; (iv). SL~\cite{wang2019symmetric}; (v). LRT~\cite{zheng2020error}. The details of the above baselines are provided in Appendix C. 
% \\

As our methods finish distribution calibration after label correction using PLC~\cite{zhang2021learningwith}, we compare our methods with PLC later (Section~\ref{sec:4.3} and Section~\ref{sec:4.4}). In this way, we can demonstrate the improvement brought by our methods more systematically and clearly. Note that we do not directly compare the proposed methods with some state-of-the-art methods, \textit{e.g.}, DivideMix~\cite{li2020dividemix}. It is because, DivideMix is an aggregation of multiple advanced techniques, \textit{e.g.}, Mixup, soft labels, and semi-supervised learning. Therefore, the comparison is not fair. In this paper, to make a fair comparison, as did in \cite{xia2021sample}, we will boost DivideMix with our methods. More details will be provided shortly.

\begin{table*}[!ht]
    \centering
    \caption{Mean and standard deviations of test accuracy (\%) on two synthetic noisy datasets. The best two mean results are in \textbf{bold}.}
    \begin{tabular}{c|l|ccccc|cc}
    \hline
         Dataset & Noise&Standard&Co-teaching+&GCE&SL&LRT&MDDC&CDDC \\\hline
         
         \parbox[t]{2mm}{\multirow{6}{*}{\rotatebox{90}{\textit{CIFAR-10}}}}&{$\text{Type-\uppercase\expandafter{\romannumeral1}}$ (35\%)} & 78.11$\pm$0.74 & 79.97$\pm$0.15 & 80.65$\pm$0.39 & 79.76$\pm$0.72 & 80.98$\pm$0.80 & \textbf{83.12$\pm$0.44} & \textbf{83.17$\pm$0.43}\\
         &{$\text{Type-\uppercase\expandafter{\romannumeral1}}$ (70\%)}&41.98$\pm$1.96&40.69$\pm$1.99&36.52$\pm$1.62&36.29$\pm$0.66&41.52$\pm$4.53 & \textbf{43.70$\pm$0.91} & \textbf{45.14$\pm$1.11}\\\cline{2-9}
         &{$\text{Type-\uppercase\expandafter{\romannumeral2}}$ (35\%)} & 76.65$\pm$0.57&77.35$\pm$0.44&77.60$\pm$0.88&77.92$\pm$0.89&\textbf{80.74$\pm$0.25}&80.59$\pm$0.19&\textbf{80.76$\pm$0.25}\\
         &{$\text{Type-\uppercase\expandafter{\romannumeral2}}$ (70\%)}&45.57$\pm$1.12&45.44$\pm$0.64&40.30$\pm$1.46&41.11$\pm$1.92&44.67$\pm$3.89 & \textbf{47.06$\pm$0.41} & \textbf{48.10$\pm$1.74}\\\cline{2-9}
         &{$\text{Type-\uppercase\expandafter{\romannumeral3}}$ (35\%)}&76.89$\pm$0.79 & 78.38$\pm$0.67 & 79.18$\pm$0.61 & 78.81$\pm$0.29 & 81.08$\pm$0.35 & \textbf{82.02$\pm$0.23} & \textbf{81.58$\pm$0.07}\\
         &{$\text{Type-\uppercase\expandafter{\romannumeral3}}$ (70\%)} & 43.32$\pm$1.00 & 41.90$\pm$0.86 & 37.10$\pm$0.59 & 38.49$\pm$1.46 & 44.47$\pm$1.23 & \textbf{46.31$\pm$0.92} & \textbf{47.92$\pm$1.28}\\\hline\hline
         \parbox[t]{2mm}{\multirow{6}{*}{\rotatebox{90}{\textit{CIFAR-100}}}}&{$\text{Type-\uppercase\expandafter{\romannumeral1}}$ (35\%)}&57.68$\pm$0.29&56.70$\pm$0.71&58.37$\pm$0.18&55.20$\pm$0.33&56.74$\pm$0.34 & \textbf{63.12$\pm$0.40} & \textbf{62.89$\pm$0.20}\\
         &{$\text{Type-\uppercase\expandafter{\romannumeral1}}$ (70\%)}&39.32$\pm$0.43&39.53$\pm$0.28&40.01$\pm$0.71&40.02$\pm$0.85&45.29$\pm$0.43 & \textbf{48.32$\pm$0.85} & \textbf{47.78$\pm$0.53}\\\cline{2-9}
         &{$\text{Type-\uppercase\expandafter{\romannumeral2}}$ (35\%)}&57.83$\pm$0.25&56.57$\pm$0.52&58.11$\pm$1.05&56.10$\pm$0.73&57.25$\pm$0.68 & \textbf{63.84$\pm$0.23} & \textbf{64.57$\pm$0.41}\\
         &{$\text{Type-\uppercase\expandafter{\romannumeral2}}$ (70\%)}&39.30$\pm$0.32&36.84$\pm$0.39&37.75$\pm$0.46&38.45$\pm$0.45 & 43.71$\pm$0.51 & \textbf{48.41$\pm$0.96} & \textbf{47.74$\pm$0.62}\\\cline{2-9}
         &{$\text{Type-\uppercase\expandafter{\romannumeral3}}$ (35\%)}&56.07$\pm$0.79&55.77$\pm$0.98&57.51$\pm$1.16&56.04$\pm$0.74&56.57$\pm$0.30& \textbf{63.60$\pm$0.33} & \textbf{63.46$\pm$0.23}\\
         &{$\text{Type-\uppercase\expandafter{\romannumeral3}}$ (70\%)}&40.01$\pm$0.38&35.37$\pm$2.65&40.53$\pm$0.60&39.94$\pm$0.84&44.41$\pm$0.19 & \textbf{47.42$\pm$0.17} & \textbf{47.04$\pm$0.34}\\\hline
    \end{tabular}
    \label{tab:syn_balanced_dataset}
\end{table*}

% \vspace{10pt}
\begin{table*}[!ht]
    \centering
    %\small
    \caption{Mean and standard deviations of test accuracy (\%) on two balanced noisy datasets. Here, the label noise is mixed by instance-dependent and class-dependent noise. The noise levels of three kinds of instance-dependent label noise are 35\%. The real noise levels range from 50\% to 70\% lastly, since class-dependent noise is \textit{overlayed} on the instance-dependent one. The best two mean results are in \textbf{bold}.}
    \begin{tabular}{c|l|ccccc|cc}
    \hline
         Dataset & Noise&Standard&Co-teaching+&GCE&SL&LRT&MDDC&CDDC \\\hline
         \parbox[c]{2mm}{\multirow{9}{*}{\rotatebox{90}{\textit{CIFAR-10}}}}&
         {$\text{Type-\uppercase\expandafter{\romannumeral1}}$ + 30\% Sym.} & 75.26$\pm$0.32&78.72$\pm$0.53&78.08$\pm$0.66&77.79$\pm$0.46&75.97$\pm$0.27& \textbf{81.35$\pm$0.13} & \textbf{80.66$\pm$0.09}\\
         &{$\text{Type-\uppercase\expandafter{\romannumeral1}}$ + 60\% Sym.} & 64.25$\pm$0.78 & 55.49$\pm$2.11 & 67.43$\pm$1.43 & 67.63$\pm$1.36 & 59.22$\pm$0.74 & \textbf{74.13$\pm$0.26} & \textbf{74.26$\pm$0.58} \\
         &{$\text{Type-\uppercase\expandafter{\romannumeral1}}$ + 30\% Asym.} & 75.21$\pm$0.64 & 75.43$\pm$2.96 & 76.91$\pm$0.56 & 77.14$\pm$0.70 & 76.96$\pm$0.45 & \textbf{78.85$\pm$0.35} & \textbf{79.12$\pm$0.21}\\
         \cline{2-9} &{$\text{Type-\uppercase\expandafter{\romannumeral2}}$ + 30\% Sym.} & 74.92$\pm$0.63 & 75.19$\pm$0.54 & 75.69$\pm$0.21 & 75.08$\pm$0.47 & 75.94$\pm$0.58 & \textbf{79.01$\pm$0.32} & \textbf{79.31$\pm$0.57} \\
         &{$\text{Type-\uppercase\expandafter{\romannumeral2}}$ + 60\% Sym.} & 64.02$\pm$0.66 & 59.89$\pm$0.63 & 66.39$\pm$0.29 & 66.76$\pm$1.60 & 58.99$\pm$1.43 & \textbf{73.21$\pm$0.85} & \textbf{72.37$\pm$0.22}\\
         &{$\text{Type-\uppercase\expandafter{\romannumeral2}}$ + 30\% Asym.} & 74.28$\pm$0.39 & 73.37$\pm$0.83 & 75.30$\pm$0.81 & 75.43$\pm$0.42 & 77.03$\pm$0.62 & \textbf{77.46$\pm$0.24} & \textbf{78.04$\pm$0.25} \\\cline{2-9}
         &{$\text{Type-\uppercase\expandafter{\romannumeral3}}$ + 30\% Sym.} & 74.00$\pm$0.38 & 77.31$\pm$0.11 & 77.00$\pm$0.12 & 76.22$\pm$0.12 & 75.66$\pm$0.57 & \textbf{80.27$\pm$0.17} & \textbf{80.26$\pm$0.33}\\
         &{$\text{Type-\uppercase\expandafter{\romannumeral3}}$ + 60\% Sym.} & 63.96$\pm$0.69 & 56.78$\pm$1.56 & 67.53$\pm$0.51 & 67.79$\pm$0.54 & 59.36$\pm$0.93 & \textbf{73.96$\pm$0.42} &\textbf{73.30$\pm$0.50}\\ 
         &{$\text{Type-\uppercase\expandafter{\romannumeral3}}$ + 30\% Asym.} & 75.31$\pm$0.34 & 74.62$\pm$1.71 & 75.70$\pm$0.91 & 76.09$\pm$0.10 & 77.79$\pm$0.14 & \textbf{78.66$\pm$0.29} & \textbf{78.60$\pm$0.51}\\\hline\hline
        \parbox[c]{2mm}{\multirow{9}{*}{\rotatebox{90}{\textit{CIFAR-100}}}} & {$\text{Type-\uppercase\expandafter{\romannumeral1}}$ + 30\% Sym.} & 48.86$\pm$0.56 & 52.33$\pm$0.64 & 52.90$\pm$0.53 & 51.34$\pm$0.64 & 45.66$\pm$1.60 &\textbf{60.01$\pm$0.69} & \textbf{59.41$\pm$0.78}\\
         &{$\text{Type-\uppercase\expandafter{\romannumeral1}}$ + 60\% Sym.}&35.97$\pm$1.12&27.17$\pm$1.66&38.62$\pm$1.65&37.57$\pm$0.43&23.37$\pm$0.72 &\textbf{45.45$\pm$0.24} & \textbf{47.98$\pm$0.56} \\
         &{$\text{Type-\uppercase\expandafter{\romannumeral1}}$ + 30\% Asym.} & 45.85$\pm$0.93 & 51.21$\pm$0.31 & 52.69$\pm$1.14 & 50.18$\pm$0.97 & 52.04$\pm$0.15 & \textbf{61.70$\pm$0.06} & \textbf{60.78$\pm$0.61}\\
         \cline{2-9}
         &{$\text{Type-\uppercase\expandafter{\romannumeral2}}$ + 30\% Sym.}&49.32$\pm$0.36&51.99$\pm$0.75&53.61$\pm$0.46&50.58$\pm$0.25&43.86$\pm$1.31 &\textbf{60.50$\pm$0.18} & \textbf{59.03$\pm$0.28}\\
         &{$\text{Type-\uppercase\expandafter{\romannumeral2}}$ + 60\% Sym.} & 35.16$\pm$0.05&25.91$\pm$0.64&39.58$\pm$3.13&37.93$\pm$0.22&23.05$\pm$0.99& \textbf{44.67$\pm$0.78} &\textbf{47.41$\pm$0.48}\\
        &{$\text{Type-\uppercase\expandafter{\romannumeral2}}$ + 30\% Asym.} & 46.50$\pm$0.95 & 51.07$\pm$1.44 & 51.98$\pm$0.37 & 49.46$\pm$0.23 & 52.11$\pm$0.46 & \textbf{62.21$\pm$0.06} & \textbf{61.56$\pm$0.39}\\\cline{2-9}
         &{$\text{Type-\uppercase\expandafter{\romannumeral3}}$ + 30\% Sym.} & 48.94$\pm$0.61&49.94$\pm$0.44&52.07$\pm$0.35&50.18$\pm$0.54&42.79$\pm$1.78 & \textbf{60.03$\pm$0.41} & \textbf{58.61$\pm$0.24}\\
         &{$\text{Type-\uppercase\expandafter{\romannumeral3}}$ + 60\% Sym.}&34.67$\pm$0.16&22.89$\pm$0.75&36.82$\pm$0.49&37.65$\pm$1.42&22.81$\pm$0.72 & \textbf{46.44$\pm$0.86} & \textbf{47.70$\pm$0.10}\\ &{$\text{Type-\uppercase\expandafter{\romannumeral3}}$ + 30\% Asym.} & 45.70$\pm$0.12 & 49.38$\pm$0.86 & 50.87$\pm$1.12 & 48.15$\pm$0.90 & 50.31$\pm$0.39 & \textbf{62.23$\pm$0.24} & \textbf{60.92$\pm$0.25} \\\hline
    \end{tabular}
    \label{tab:syn_balanced_dataset_mixed}
\end{table*}

\subsection{Evaluations on Synthetic Datasets}\label{sec:4.2}
\myPara{Datasets.} We validate our methods on the manually corrupted version of \textit{CIFAR-10} \cite{krizhevsky2009learning} and \textit{CIFAR-100} \cite{krizhevsky2009learning}, which are popularly used in learning with label noise~\cite{wei2021optimizing,huang2019o2u,lee2018cleannet,kim2019nlnl,lee2019robust,nishi2021augmentation,kim2021fine,sachdeva2021evidentialmix,chen2020beyond,feng2021can}. \textit{CIFAR-10} has 10 classes of images including 50,000 training images and 10,000 test images. \textit{CIFAR-100} also has 50,000 training images and 10,000 test images, but 100 classes.

\myPara{Noise generation.} We consider a generic family of noise. Following \cite{zhang2021learningwith}, we employ not only instance-dependent noise, but also hybrid noise that includes both instance-dependent noise and class-dependent noise. For instance-dependent noise, we consider three kinds of noise functions within the PMD noise family. To make the task challenging enough, for input $\xx$, we flip the label from the most confident class $a_{\xx}$ to the second confident class $b_{\xx}$. The three noise functions are in the following:
\begin{equation}
\begin{aligned}
    &\text{Type-\uppercase\expandafter{\romannumeral1}}:\rho_{a_{\xx},b_{\xx}}=-\frac{1}{2}[\eta_{a_{\xx}}(\xx)-\eta_{b_{\xx}}(\xx)]^2+\frac{1}{2},\\
    &\text{Type-\uppercase\expandafter{\romannumeral2}}:\rho_{a_{\xx},b_{\xx}}=1-[\eta_{a_{\xx}}(\xx)-\eta_{b_{\xx}}(\xx)]^3,\\
    &\text{Type-\uppercase\expandafter{\romannumeral3}}:\rho_{a_{\xx},b_{\xx}}=1-\frac{1}{3}\big[[\eta_{a_{\xx}}(\xx)-\eta_{b_{\xx}}(\xx)]^3\\
    &+[\eta_{a_{\xx}}(\xx)-\eta_{b_{\xx}}(\xx)]^2+[\eta_{a_{\xx}}(\xx)-\eta_{b_{\xx}}(\xx)]\big].\\
\end{aligned}
\end{equation}
For PMD noise only, the noise levels are set to 35\% and 70\% respectively. For class-dependent noise, we exploit symmetric (abbreviated as sym.)~\cite{liu2019peer,liu2020early,thekumparampil2018robustness,lyu2020Curriculum,li2021provably} and asymmetric (abbreviated as asym.) noise~\cite{patrini2017making,yi2019probabilistic,wei2020combating,tan2021co,han2018masking,yao2021jo}. The noise is generated by the class-dependent transition matrix $T_{ij}=\mathbbm{P}(\tilde{y}=j|y=i)$ \cite{hendrycks2018using,wu2020class2simi,zhang2021learning}. If the noise level is $\epsilon$, for the symmetric one, the clean label is uniformly flipped to other classes, \textit{i.e.}, $T_{ij}=\epsilon/(k-1)$ for $i\neq j$, and $T_{ii}=1-\epsilon$. For the asymmetric one, the clean label $i$ is flipped to $j$ or remains unchanged with $T_{ij}=\epsilon$ or $T_{ii}=1-\epsilon$. In this paper, for \textit{CIFAR-10}, asymmetric noise is generated by mapping \texttt{TRUCK} $\rightarrow$ \texttt{AUTOMOBINE}, \texttt{BIRD} $\rightarrow$ \texttt{AIRPLANE}, \texttt{DEER} $\rightarrow$ \texttt{HORSE}, and \texttt{CAT}$\leftrightarrow$ \texttt{DOG}. For \textit{CIFAR-100}, the 100 classes are divided into 20 super-classes with each has 5 sub-classes, and we flip between two randomly selected sub-classes within each super-class. We leave out 10\% of noisy examples for validation. Note that the clean labels are dominating in noisy classes and that noisy labels are random, the accuracy on the noisy validation set and the accuracy on the clean test set are \textit{positively correlated}. The noisy validation set can thus be used~\cite{seo2019combinatorial,xia2021robust,nguyen2020self}.

\myPara{Model and configurations.} For \textit{CIFAR-10} and \textit{CIFAR-100}, we use a preact ResNet-34 network. We perform data
augmentation by horizontal random flips and 32×32 random crops after padding 4 pixels on each side. The batch size is set to 128 and we run 100 epochs \textit{CIFAR-10}, and 180 epochs for \textit{CIFAR-100}. We adopt SGD optimizer (momentum=0.9). It should be noted that, since the data sampled from Gaussian distribution are penultimate layer's features, we adopt two SGD optimizers for the corrected dataset and the new sampled dataset. The initialized learning rates are set to 0.01 and 0.0001 for two SGD optimizers. We divide learning rates by 0.5 at $40-$th and $80-$th epochs. All experiments are repeated three times with different random seeds. We report the mean and standard deviation of experimental results.
 
\myPara{Measurement.} To measure the performance, we use the test
accuracy, i.e. \textit{test accuracy = (\# of correct predictions) / (\# of testing)}. Intuitively, the higher test accuracy means that a method is more robust to label noise.

\myPara{Experimental results.}
Table~\ref{tab:syn_balanced_dataset} shows the results of our methods and other baselines under three types of PMD noise with noise levels 35\% and 70\%. For \textit{CIFAR-10}, as can be seen, our methods, \textit{i.e.}, MDDC and CDDC, outperform baselines in almost all cases. Although the baseline LRT can achieve comparable performance in the case $\text{Type-\uppercase\expandafter{\romannumeral2}}~(35\%)$, its performance decreases drastically as the noise rate increases. For \textit{CIFAR-100}, we observe that our methods exhibit a more distinct improvement. In addition, in partial noise settings, our methods can achieve over 7\% higher test accuracy than all baselines. Table~\ref{tab:syn_balanced_dataset_mixed}  summarizes the test accuracy achieved on the datasets corrupted with a combination of PMD noise and class-dependent noise. As can be seen, our methods work significantly better than baselines in all cases. In more detail, for \textit{CIFAR-100}, our methods can achieve more than 10\% lead over baselines in the case of combining PMD noise and additional 60\% symmetric noise.   

\subsection{Experiments on Real-World Datasets}\label{sec:4.3}
\myPara{Datasets.} In this paper, we exploit two large-scale real-world label-noise datasets, \textit{i.e.}, \textit{WebVision} \cite{li2017learning} and \textit{Clothing1M} \cite{xiao2015learning}.  \textit{WebVision} contains 2.4 million images crawled from the websites using the 1,000 concepts in \textit{ImageNet ILSVRC12}. Following the ``mini'' setting in \cite{ma2020normalized,chen2019understanding}, we take the first 50 classes of the Google resized image subset, and evaluate the trained networks on the same 50 classes of the \textit{WebVision} and \textit{ILSVRC12} validation sets, which are exploited as test sets. Meanwhile, \textit{Clothing1M} consists of 1M noisy training examples collected from online shopping websites. 

\myPara{Model and configurations.} For \textit{WebVision}, we use Inception-ResNet V2 \cite{szegedy2017inception}. For \textit{Clothing1M}, we exploit ResNet-50 with ImageNet pretrained weights. The networks are trained using SGD with a momentum of 0.9, a weight decay of 0.001, and a batch size of 32. For \textit{WebVision}, we train the network for 100 epochs. The initial learning rate is set as 0.01 and reduced by a factor of 10 after 50 epochs. For \textit{Clothing1M}, we train the
network for 20 epochs. The initial learning rate is set as 0.001 and reduced by a factor of 10 after 5 epochs. In experiments, we boost DivideMix by applying our methods after the DivideMix paradigm. 

\begin{table}[!h]
    \centering
    %\small
    \caption{Comparison with state-of-the-art methods trained on (mini) \textit{WebVision}~\cite{chen2019understanding} and \textit{Clothing1M}. 
    The best two results on the \textit{WebVision} validation set, the ImageNet \textit{ILSVRC12} validation set and the \textit{Clothing1M} test set are in \textbf{bold}.}
    \begin{tabular}{l|c|c|c}
    \hline
         Test Dataset & \textit{WebVision} & \textit{ILSVRC12}& \textit{Clothing1M}\\\hline
         Method & Accuracy (\%) & Accuracy (\%) &Accuracy (\%)  \\\hline
         Standard&59.37&58.86 & 68.94\\
         Co-teaching+&61.72&59.72 &64.02\\
         GCE&62.33&61.35 &69.75\\
         SL&62.17&60.95 & 71.02\\
         LRT&62.96&62.09 &71.74\\\hline
         PLC&63.90&62.74 &74.02\\\hline
         MDDC&\textbf{65.06}&\textbf{64.30} &\textbf{74.39}\\
         CDDC&\textbf{64.82}&\textbf{64.09}&\textbf{74.43}\\\hline
    \end{tabular}
	\label{tab:webvision_clothing1m}
\end{table}

\begin{table}[!h]
%\small
    \centering
    %\small
    \caption{Comparison with state-of-the-art method DivideMix~\cite{li2020dividemix} trained on real-world noisy datasets. Here, DivideMix-M and DivideMix-C mean that DivideMix is boosted with our methods MDDC and CDDC respectively. The best two results are in \textbf{bold}.}
    \begin{tabular}{l|c|c|c}
    \hline
    Test Dataset&\textit{WebVision} & \textit{ILSVRC12} & \textit{Clothing1M}\\\hline
         Method & Accuracy (\%) &  Accuracy (\%) & Accuracy (\%) \\\hline
         DivideMix & 77.32&75.20&74.76\\\hline
         DivideMix-M& \textbf{77.45}&\textbf{75.30}&\textbf{74.85}\\
         DivideMix-C& \textbf{77.62}&\textbf{75.68}&\textbf{74.96}\\\hline
    \end{tabular}
	\label{tab:comp_dividemix}
\end{table}

\myPara{Results.} First, Table~\ref{tab:webvision_clothing1m} shows the results on \textit{WebVision} and \textit{Clothing1M}. Compared MDDC and CDDC with the baselines without multiple techniques, we can see our methods outperform them clearly. Second, from Table~\ref{tab:comp_dividemix}, we observe that with our methods, the performance of the state-of-the-art method, i.e., DivideMix, is further enhanced. All results support our claims well. 
\vspace{-10pt}
\subsection{Ablation Study}\label{sec:4.4}
\myPara{The role of different parts.} We study the effect of removing distribution-calibration components to provide insights into what makes our methods successful. Note that without the mean-based and covariance-based methods, MDDC and CDDC will degenerate to the label correction method PLC \cite{zhang2021learningwith}. We analyze the results in Table~\ref{tab:plc} as follows. As we can see, our methods, \textit{i.e.}, MDDC and CDDC, achieve clear improvements in classification performance over PLC.

\begin{table}[!ht]
    \centering
    \footnotesize
    \caption{Mean and standard deviations of test accuracy (\%) on two synthetic noisy datasets. The improvement of MDDC and CDDC over PLC is highlighted.}
    \setlength{\tabcolsep}{1.3mm}{
    \begin{tabular}{c|l|ccc}
    \hline
         Dataset & Noise& PLC & MDDC & CDDC \\\hline
         \parbox[t]{2mm}{\multirow{6}{*}{\rotatebox{90}{\textit{CIFAR-10}}}}&{$\text{Type-\uppercase\expandafter{\romannumeral1}}$ (35\%)} &82.80$\pm$0.27 & 83.12$\pm$0.44$_{ \textcolor{red}{\uparrow0.32}}$ & 83.17$\pm$0.43$_{\textcolor{red}{\uparrow0.37}}$\\
         &{$\text{Type-\uppercase\expandafter{\romannumeral1}}$ (70\%)}& 42.74$\pm$2.14  & 43.70$\pm$0.91$_{\textcolor{red}{\uparrow0.96}}$ & 45.14$\pm$1.11$_{\textcolor{red}{\uparrow2.40}}$\\\cline{2-5}
         &{$\text{Type-\uppercase\expandafter{\romannumeral2}}$ (35\%)} & 81.54$\pm$0.47 & 80.73$\pm$0.19$_{\textcolor{blue}{\downarrow0.81}}$ & 81.11$\pm$0.25$_{\textcolor{blue}{\downarrow0.43}}$\\
         &{$\text{Type-\uppercase\expandafter{\romannumeral2}}$ (70\%)}& 46.04$\pm$2.20 & 47.06$\pm$0.41$_{\textcolor{red}{\uparrow1.02}}$ & 48.10$\pm$1.74$_{\textcolor{red}{\uparrow2.06}}$\\\cline{2-5}
         &{$\text{Type-\uppercase\expandafter{\romannumeral3}}$ (35\%)}& 81.50$\pm$0.50 & 82.34$\pm$0.23$_{\textcolor{red}{\uparrow0.80}}$ & 81.67$\pm$0.07$_{\textcolor{red}{\uparrow0.17}}$\\
         &{$\text{Type-\uppercase\expandafter{\romannumeral3}}$ (70\%)} & 45.05$\pm$1.13 & 46.31$\pm$0.92$_{\textcolor{red}{\uparrow1.26}}$ & 47.92$\pm$1.28$_{\textcolor{red}{\uparrow2.06}}$\\\hline\hline
         \parbox[t]{2mm}{\multirow{6}{*}{\rotatebox{90}{\textit{CIFAR-100}}}}&{$\text{Type-\uppercase\expandafter{\romannumeral1}}$ (35\%)} & 60.01$\pm$0.43 & 63.12$\pm$0.40$_{\textcolor{red}{\uparrow3.11}}$ & 62.89$\pm$0.40$_{\textcolor{red}{\uparrow2.88}}$\\
         &{$\text{Type-\uppercase\expandafter{\romannumeral1}}$ (70\%)}& 45.92$\pm$0.61 & 48.32$\pm$0.85$_{\textcolor{red}{\uparrow2.40}}$ & 47.78$\pm$0.53$_{\textcolor{red}{\uparrow1.86}}$\\\cline{2-5}
         &{$\text{Type-\uppercase\expandafter{\romannumeral2}}$ (35\%)} & 63.68$\pm$0.29 &63.84$\pm$0.23$_{\textcolor{red}{\uparrow0.16}}$ & 64.57$\pm$0.41$_{\textcolor{red}{\uparrow0.89}}$\\
         &{$\text{Type-\uppercase\expandafter{\romannumeral2}}$ (70\%)} & 45.03$\pm$0.50 & 48.41$\pm$0.96$_{\textcolor{red}{\uparrow3.38}}$ & 47.74$\pm$0.62$_{\textcolor{red}{\uparrow2.71}}$\\\cline{2-5}
         &{$\text{Type-\uppercase\expandafter{\romannumeral3}}$ (35\%)}& 63.68$\pm$0.29 & 63.60$\pm$0.33$_{\textcolor{blue}{\downarrow0.08}}$ & 63.46$\pm$0.23$_{\textcolor{blue}{\downarrow0.22}}$\\
         &{$\text{Type-\uppercase\expandafter{\romannumeral3}}$ (70\%)} & 44.45$\pm$0.62  & 47.54$\pm$0.17$_{\textcolor{red}{\uparrow3.09}}$ & 47.04$\pm$0.34$_{\textcolor{red}{\uparrow2.59}}$\\\hline
         \end{tabular}}
    \label{tab:plc}
\end{table}

\myPara{The influence of $\alpha$.} For CDDC, the value of hyperparameter $\alpha$ controls the degree of the disturbance and can be determined with a noisy validation set. Results have shown the effectiveness of the determination. Here, Figure~\ref{fig:ablation_alpha} shows the influence of $\alpha$ with different values. Specifically, the value of hyperparameter $\alpha$ is chosen in $\{0.1, 0.2, 0.3, 0.4\}$. 

\begin{figure}[htbp]
\centering
% \label{fig:ablation_alpha}
\subfigure[Type-\uppercase\expandafter{\romannumeral1} + 30\% Sym. noise]{
\begin{minipage}[t]{0.47\linewidth}
\centering
\includegraphics[width=1.7in]{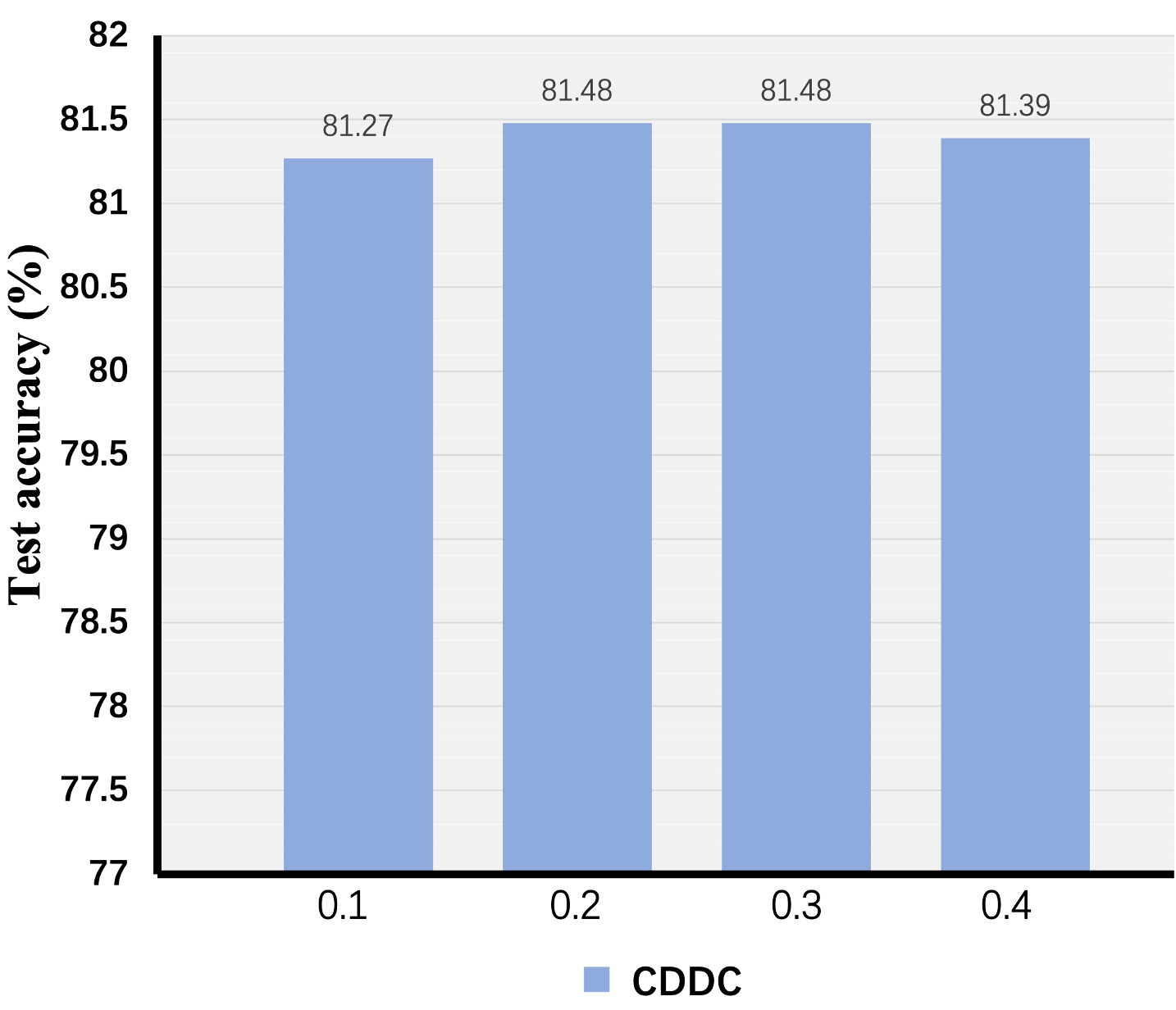}
%\caption{fig1}
\end{minipage}}
\hspace{5pt}
\subfigure[Type-\uppercase\expandafter{\romannumeral1} + 60\% Sym. noise]{
\begin{minipage}[t]{0.47\linewidth}
\centering
\includegraphics[width=1.7in]{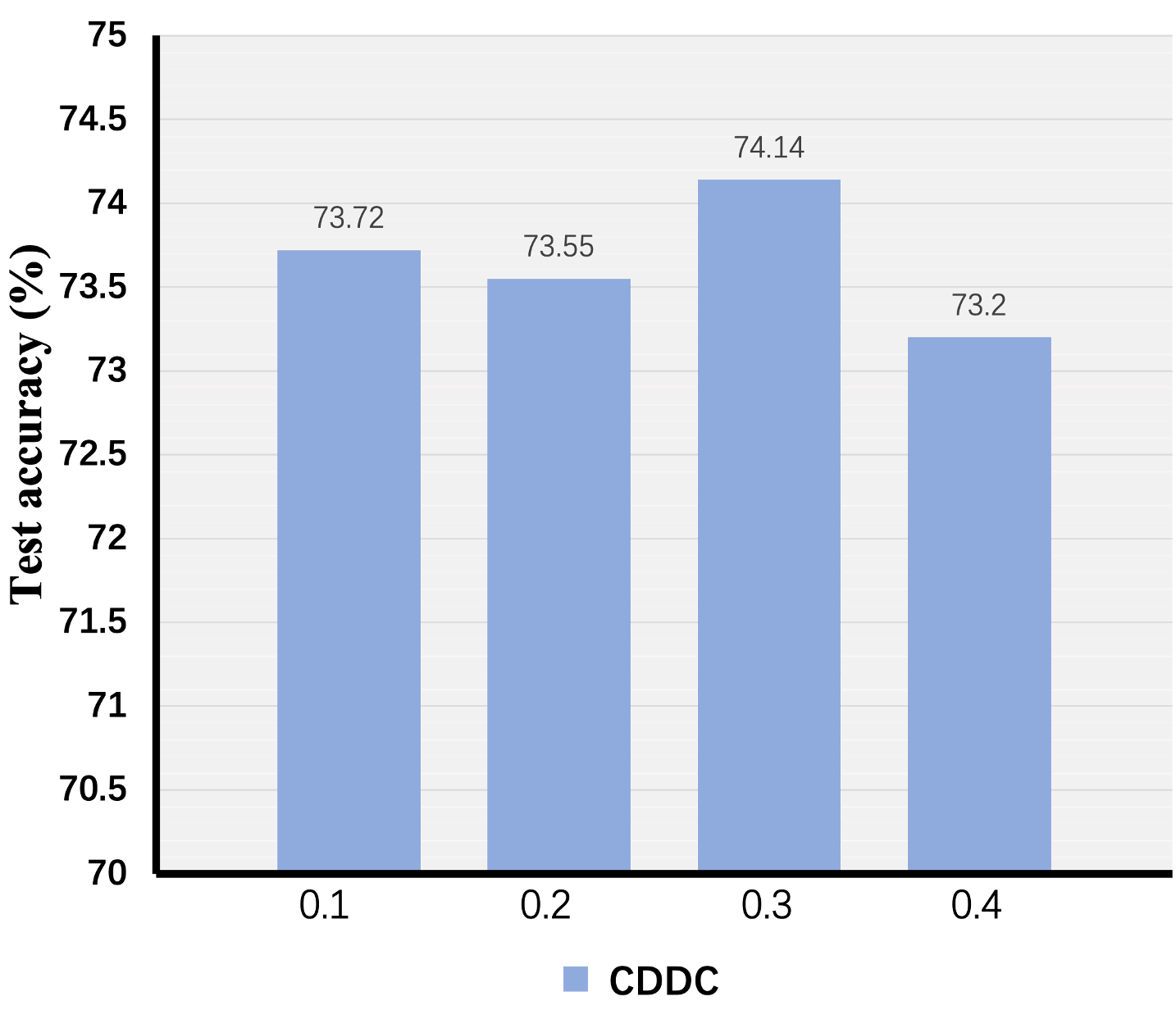}
%\caption{fig2}
\end{minipage}}%
\centering

\caption{Results of ablation study on noisy  \textit{CIFAR-1O}  under different values of $\alpha$.  }
\centering

\label{fig:ablation_alpha}

\end{figure}

\myPara{The influence of $\lambda$.} Recall that we sample $\lambda$ proportion data points from the built multivariate
Gaussian distributions. Here, Figure~\ref{fig:ablation_lambda} shows the influence of $\lambda$ with different values. Specifically, the value of hyperparameter $\lambda$ is chosen in $\{0.10, 0.15, 0.20, 0.25\}$. 

\begin{figure}[htbp]
\centering
\subfigure[Type-\uppercase\expandafter{\romannumeral1} + 30\% Sym. noise]{
\begin{minipage}[t]{0.47\linewidth}
\centering
\includegraphics[width=1.7in]{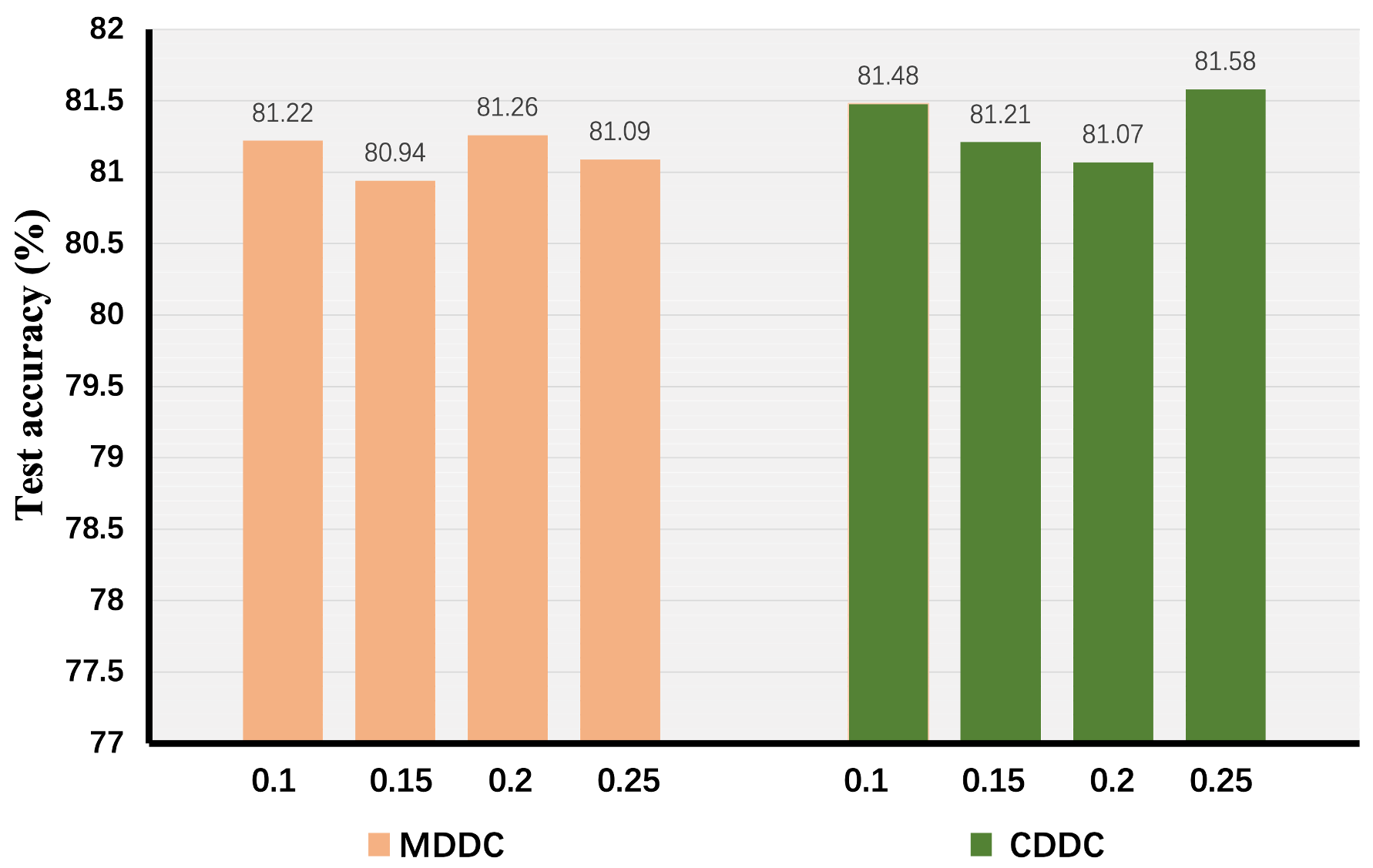}
%\caption{fig1}
\end{minipage}}
\hspace{5pt}
\subfigure[Type-\uppercase\expandafter{\romannumeral1} + 60\% Sym. noise]{
\begin{minipage}[t]{0.47\linewidth}
\centering
\includegraphics[width=1.7in]{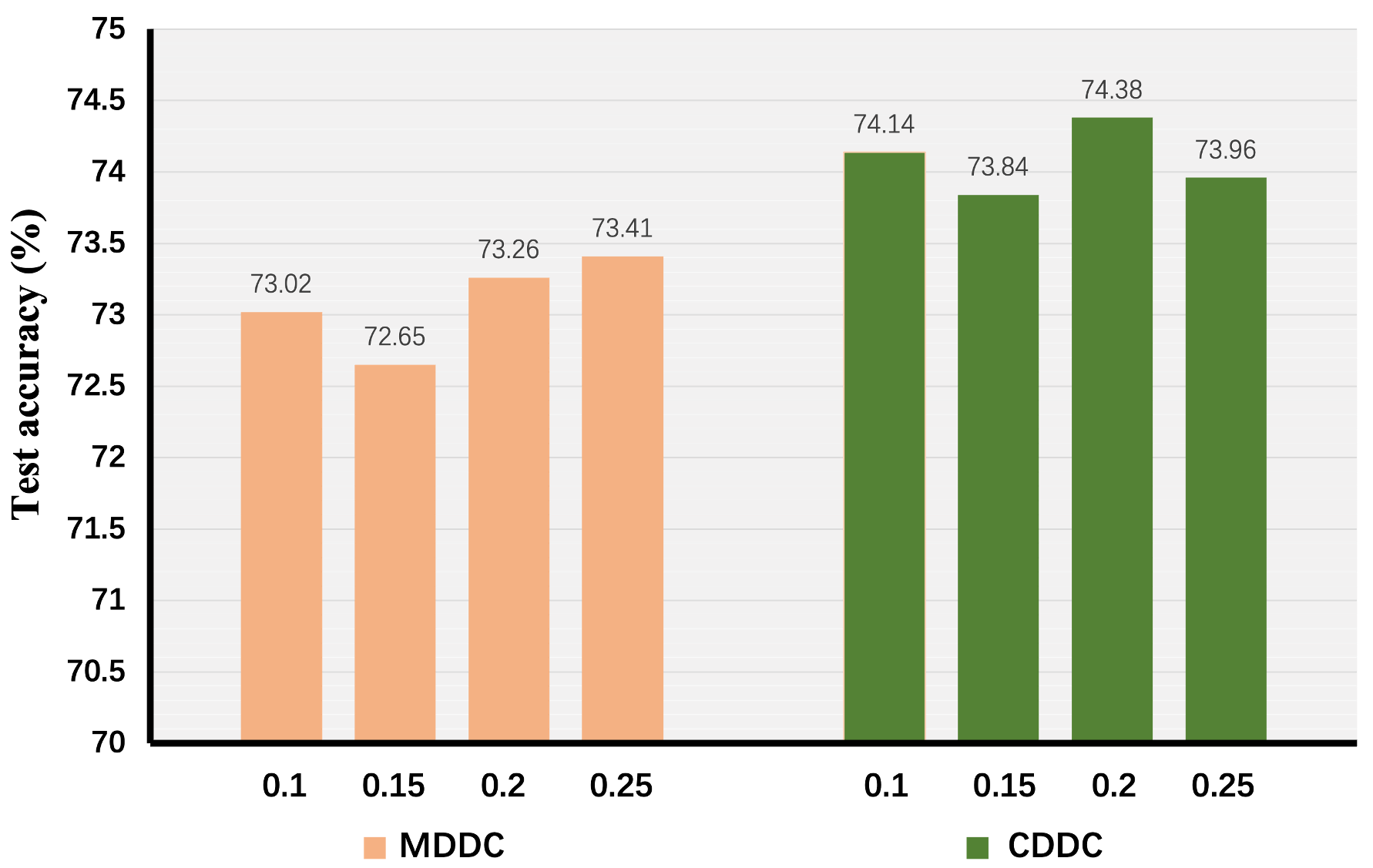}
%\caption{fig2}
\end{minipage}}%
\centering

\caption{Results of ablation study on noisy  \textit{CIFAR-1O} under differnet values of $\lambda$. }
\label{fig:ablation_lambda}

\end{figure}

\begin{table}[!ht]
    \centering
    \footnotesize
    \caption{Mean and standard deviations of test accuracy (\%) on noisy \textit{CIFAR-10} with different networks. The improvement of MDDC and CDDC over PLC is highlighted.}
    \setlength{\tabcolsep}{1.3mm}{
    \begin{tabular}{c|l|ccc}
    \hline
    Network & Noise & PLC &MDDC &CDDC\\\hline
    \multirow{3}{*}{\text{SENet18}}&{$\text{Type-\uppercase\expandafter{\romannumeral1}}$+30\% Sym.}    & 78.45$\pm$0.43  &80.88$\pm$0.22$_{\textcolor{red}{\uparrow2.43}}$  & 80.35$\pm$0.15$_{\textcolor{red}{\uparrow1.90}}$\\
    &{$\text{Type-\uppercase\expandafter{\romannumeral1}}$+60\% Sym.}   & 72.56$\pm$0.19& 72.65$\pm$0.95$_{\textcolor{red}{\uparrow0.09}}$ &72.69$\pm$0.53$_{\textcolor{red}{\uparrow0.13}}$\\
     &{$\text{Type-\uppercase\expandafter{\romannumeral1}}$+30\% Asym.}   & 77.33$\pm$0.60  & 78.46 $\pm$1.14$_{\textcolor{red}{\uparrow1.13}}$ &78.52$\pm$0.75$_{\textcolor{red}{\uparrow1.19}}$\\
    \hline
    
    \multirow{3}{*}{\text{MobileNetV2}}&{$\text{Type-\uppercase\expandafter{\romannumeral1}}$+30\% Sym.}  & 74.90$\pm$0.81  & 80.49$\pm$0.20$_{\textcolor{red}{\uparrow5.59}}$ &80.29$\pm$0.17$_{\textcolor{red}{\uparrow5.39}}$\\
    &{$\text{Type-\uppercase\expandafter{\romannumeral1}}$+60\% Sym.}  & 71.35$\pm$1.10  & 72.27$\pm$1.16$_{\textcolor{red}{\uparrow0.92}}$ &73.79$\pm$0.96$_{\textcolor{red}{\uparrow2.44}}$\\
     &{$\text{Type-\uppercase\expandafter{\romannumeral1}}$+30\% Asym.}   & 75.74$\pm$0.82  & 78.23 $\pm$0.32$_{\textcolor{red}{\uparrow2.58}}$ &78.64$\pm$0.34$_{\textcolor{red}{\uparrow2.90}}$\\\hline

     \multirow{3}{*}{\text{WRN-40-2}}&{$\text{Type-\uppercase\expandafter{\romannumeral1}}$+30\% Sym.}  & 72.52$\pm$0.50 &79.30$\pm$0.01$_{\textcolor{red}{\uparrow6.78}}$ &78.80$\pm$0.41$_{\textcolor{red}{\uparrow6.28}}$\\
    &{$\text{Type-\uppercase\expandafter{\romannumeral1}}$+60\% Sym.}  & 70.90$\pm$0.03 & 68.79$\pm$0.09$_{\textcolor{blue}{\downarrow2.11}}$ & 71.41$\pm$0.09$_{\textcolor{red}{\uparrow0.51}}$ \\
     &{$\text{Type-\uppercase\expandafter{\romannumeral1}}$+30\% Asym.}   & 75.01$\pm$0.32 & 78.08$\pm$0.28$_{\textcolor{red}{\uparrow3.07}}$ & 77.91$\pm$0.58$_{\textcolor{red}{\uparrow2.90}}$ \\\hline  
     
     \multirow{3}{*}{\text{EfficientNet}}&{$\text{Type-\uppercase\expandafter{\romannumeral1}}$+30\% Sym.}  & 71.66$\pm$0.20 & 78.09$\pm$0.09$_{\textcolor{red}{\uparrow6.43}}$ & 78.32$\pm$0.04$_{\textcolor{red}{\uparrow6.66}}$ \\
    &{$\text{Type-\uppercase\expandafter{\romannumeral1}}$+60\% Sym.}  & 68.31$\pm$0.14 & 68.21$\pm$0.69$_{\textcolor{blue}{\downarrow0.10}}$ & 70.58$\pm$0.49$_{\textcolor{red}{\uparrow2.27}}$ \\
     &{$\text{Type-\uppercase\expandafter{\romannumeral1}}$+30\% Asym.}   & 74.11$\pm$0.25 & 77.83$\pm$0.14$_{\textcolor{red}{\uparrow3.72}}$ & 77.72$\pm$0.22$_{\textcolor{red}{\uparrow3.61}}$ \\
   \hline
    \end{tabular}
    }
    \label{tab:ablation_networks}
\end{table}

\myPara{The results with different networks.} Before this, on \textit{CIFAR-10} and \textit{CIFAR-100}, we have shown that our methods are effective with a preact ResNet-34 network. To show our methods are robust to different network structures, we exploit SENet18~\cite{hu2018squeeze}, MobileNetV2~\cite{howard2018inverted}, WRN-40-2~\cite{zagoruyko2016wide}, and EfficientNet~\cite{tan2019efficientnet}. The performance of our methods using these networks is shown in Table~\ref{tab:ablation_networks}.

These results show that our methods are not sensitive to the value of $\alpha$ and $\lambda$. Moreover, our methods are robust to the choice of networks. The results imply that it is easy to apply our methods in real-world scenarios.

% \vspace{-2pt}
\section{Conclusion}\label{sec:5}
We propose a dynamic distribution-calibration strategy to handle the distribution shift problem brought by instance-dependent label noise. We suppose that before training data are corrupted by label noise, each class conforms to a multivariate Gaussian distribution at the feature level. Afterwards, two methods named MDDC and CDDC are proposed to calibrate the shifted distribution. Extensive experiments show that our methods can help improve the robustness against complicated instance-dependent label noise. 
% \vspace{2pt}
\section*{Acknowledgement}
This work was supported by SZSTC Grant No. JCYJ20190809172201639 and WDZC20200820200655001, Shenzhen Key Laboratory\\ ZDSYS20210623092001004.

%%
%% The next two lines define the biblography style to be used, and
%% the bibliography file.
\newpage
\bibliographystyle{ACM-Reference-Format}
\balance
\bibliography{sample-base}

%%
%% If your work has an appendix, this is the place to put it.
\clearpage
\appendix

\end{document}

% --- supplement: supp.tex ---

%%
%% The "title" command has an optional parameter,
%% allowing the author to define a "short title" to be used in page headers.
\title{Appendix of “Tackling Instance-Dependent Label Noise with Dynamic Distribution Calibration”}

%%
%% By default, the full list of authors will be used in the page
%% headers. Often, this list is too long, and will overlap
%% other information printed in the page headers. This command allows
%% the author to define a more concise list
%% of authors' names for this purpose.
% \renewcommand{\shortauthors}{Trovato and Tobin, et al.}

%%
%% The abstract is a short summary of the work to be presented in the
%% article.

%%
%% This command processes the author and affiliation and title
%% information and builds the first part of the formatted document.
% \maketitle
% \onecolumn
\appendix
\maketitle
% \onecolumn
\section{Proof of theoretical results}\label{app:proof}
\iffalse
\subsection{Proof}
\begin{lemma}(Mean Estimation~\cite{lai2016agnostic})\label{thm:mean-estimation}
Suppose we are given a set of 
$m=\Omega(\frac{d(\log d+\log \frac{1}{\delta})\log d}{\delta^2})$ examples from the Huber's contamination model, where the underlying d-dimensional distribution is $\mathcal{N}(\bm{\mu},\bm{\Sigma})$. There is a poly($d$,$1/\delta$)-time algorithm with high probability satisfies
\begin{equation}    \lVert\bm{\mu}-\hat{\bm{\mu}}\rVert_2=
 \begin{cases}
    \Omega(\epsilon+\delta)\sigma \sqrt{\log d}   &  \text{if\quad $\bm{\Sigma}=\sigma^2\bm{I}$ \\} \\
    \Omega(\epsilon^{1/2}+\delta)\lVert \bm{\Sigma}\rVert_2^{1/2}\log^{1/2} d
        &  \text{otherwise}
 \end{cases}               
 \end{equation}
\end{lemma}
Note that a covariance matrix can be computed by $\mathbbm{E}(\bm{x}-\bm{\mu})(\bm{x}-\bm{\mu})^T$. Thus, if  $\bm{\mu}$ is known, we can reshape $(\bm{x}-\bm{\mu})(\bm{x}-\bm{\mu})^T$ as a vector in $\tilde{\mathcal{S}}_\mathcal{P}^2$ and run the \textit{AgnosticMean} (Algorithm $1$) to estimate corresponding covariance matrix.
\begin{lemma}(Covariance Estimation~\cite{lai2016agnostic})\label{thm:covariance-estimation}
Suppose we are given a set of 
$m=\Omega(\frac{d^2(\log d+\log \frac{1}{\delta})\log d}{\delta^2})$ examples from the Huber's contamination model, where the underlying d-dimensional distribution is $\mathcal{N}(\bm{\mu},\bm{\Sigma})$. There is a poly($d$,$1/\delta$)-time algorithm to estimate $\hat{\bm{\Sigma}}$ with high probability satisfies
\begin{equation}
    \lVert\hat{\bm{\Sigma}}-\bm{\Sigma}\rVert_F = \Omega(\epsilon^{1/2}+C_{4,2}^{1/4}(\epsilon+\delta)^{3/4})C_4^{1/2}\lVert\bm{\Sigma}\rVert_2\log^{1/2}d
\end{equation}
where $\bm{x}\sim\mathcal{N}(\bm{\mu},\bm{\Sigma})$ and $(\bm{x}-\bm{\mu})(\bm{x}-\bm{\mu})^T$ have bounded fourth moments with the constants $C_4$ and $C_{4,2}$. 
\end{lemma}
\fi

\subsection{Pre-knowledge}
Before presenting our main theoretical results, we first introduce some definitions. 
\begin{definition}[$(\beta, v)$-consistency~\cite{zhang2021learningwith}]\label{def:consistency_beta_v}
Suppose that a set of data $(\bm{x},\tilde{y})$ are sampled from $\tilde{\mathcal{D}}(\bm{x},\tilde{\eta}(\bm{x}))$, where $\tilde{\eta}(\bm{x})$ outputs the noisy posterior probability for $\bm{x}$. Given a hypothesis class $ \mathcal{M}$ with sufficient examples, and $f(\xx)= \mathop{\arg\min}\limits_{m\in \mathcal{M}}  \mathbbm{E}_{(\xx,\tilde{y})\sim\tilde{\mathcal{D}}(\bm{x},\tilde{\eta}(\bm{x}))}Loss(m(x),\tilde{y})$, we define $\mathcal{M}$ is $(\beta, v)$-consistency if
\begin{align}\label{eq:consistency_beta_v}
    &|f(\xx)-\eta(\xx)|\leq\beta\mathbbm{E}_{(\zz, \tilde{y})\sim\tilde{\mathcal{D}}(\zz,\tilde{\eta}(\zz))}\big[\mathbbm{1}_{\{ \tilde{y}_{\zz}\neq\eta^*(\zz)\}}(\zz)\big||\eta(\zz)-\frac{1}{2}|\geq|\eta(\xx)-\frac{1}{2}|\big]+v \notag,
\end{align}
where $\eta(\zz)$ outputs the clean posterior probability for $\zz$ and $\eta^*(\zz)$ is the Bayes optimal classifier for $\zz$. 
\end{definition}
Definition~\ref{def:consistency_beta_v} measures the inconsistency between $f(\cdot)$ and $\eta(\cdot)$. We can approximate the error of the classifier $f$ at $\xx$ by computing the risk of $\eta^*(\cdot)$ at $\bm{z}$ where $\eta^*(\zz)$ are more confident than $\eta^*(\xx)$.

\begin{definition}[$c_*,c^*)$-bounded distribution~\cite{zhang2021learningwith}]
Denote the cumulative distribution function of $|\eta(\xx)-\frac{1}{2}|$ as $R(\cdot)$. For a random variable $o$, $R(o)= \mathbbm{P}(|\eta(\xx)-\frac{1}{2}|\leq o)$ and the corresponding probability density function is $r(o)$. We define the distribution $\mathcal{D}$ is ($c_*,c^*)$-bounded if $0 < c_*\leq r(o) \leq c^*$ for all $0\leq o \leq \frac{1}{2}$. The worst-case density-imbalance ratio of $\mathcal{D}$ is denoted by $\ell:=\frac{c^*}{c_*}$.
\end{definition}

The bounded condition enforces the continuity of the density function. The continuity allows one example to borrow useful information from its (clean) neighborhood region, which can help handle mislabeled data. 

\begin{definition}[Pure ($\tau, f, \eta$)-level set]
We say a set $S(\tau, \eta) :=\big \{ \xx \big||\eta(\xx)-\frac{1}{2}|\geq\tau\big \}$ is pure for the classifier $f$, if $y_{f(\xx)}=\eta^*(\xx)$ for all $\xx\in S(\tau, \eta)$, where $y_{f(\xx)}$ denotes the label
predicted by $f$, \textit{i.e.}, $y_{f(\xx)}=\mathbbm{1}_{\{f(\xx)\geq\frac{1}{2}\}}$.
\end{definition}
The above definition indicates that the classifier $f$ has sufficient discriminating ability in a specific region. We present the following assumption for theoretical results.

\begin{assumption}
    We assume that the given hypothesis class $\mathcal{M}$ is $(\beta, v)$-consistency, and the underlying distribution $\mathcal{D}(\xx, \eta(x))$ satisfied the $(c^*, c_*)$-bounded condition. 
\end{assumption}
\subsection{Proof}
\begin{lemma}[One round purity improvement]\label{lem:one_round}
Suppose that the classifier $f$ owns a pure ($\tau, f, \eta$)-level set where $3\xi v \leq \tau<\frac{1}{2}$, where $\xi<1$ is a constant that depends on the PMD noise. After one round label correction and one round sampling from estimated Gaussian distributions, the training set consists of two parts: the corrected set ($\xx$, $\tilde{y}_{new})$ and the sampling set ($\xx$, $\tilde{y}_{\mathcal{G}}$). Training on these data, $f$ owns a improved pure ($\tau_{new}, f, \eta$)-level set where $\frac{1}{2}-\tau_{new}\geq (1+\frac{\xi v}{\beta \ell})(\frac{1}{2}-\tau) $. 
\end{lemma}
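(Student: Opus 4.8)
The plan is, for a candidate new threshold $\tau_{new}<\tau$, to show that the retrained classifier $f$ labels every $\xx$ with clean margin $t:=|\eta(\xx)-\tfrac12|\ge\tau_{new}$ correctly, i.e.\ $y_{f(\xx)}=\eta^*(\xx)$, which is exactly what it means for $S(\tau_{new},\eta)$ to be pure for $f$. Since $\eta^*(\xx)=\mathbbm{1}_{\{\eta(\xx)\ge 1/2\}}$ and $y_{f(\xx)}=\mathbbm{1}_{\{f(\xx)\ge 1/2\}}$, it suffices to prove $|f(\xx)-\eta(\xx)|<t$, because a deviation strictly below the margin cannot push $f(\xx)$ across $\tfrac12$. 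I would get this from the $(\beta,v)$-consistency inequality of Definition~\ref{def:consistency_beta_v} applied at $\xx$, so the task reduces to upper bounding the residual-noise quantity $\mathbbm{E}\big[\mathbbm{1}_{\{\tilde{y}_{\zz}\neq\eta^*(\zz)\}}\,\big|\,|\eta(\zz)-\tfrac12|\ge t\big]$ for the distribution of the combined (corrected plus sampled) training set.

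The key structural input is purity of the old level set. Since $y_{f(\zz)}=\eta^*(\zz)$ for every $\zz$ with $|\eta(\zz)-\tfrac12|\ge\tau$, the label-correction step overwrites the noisy labels in this band by the Bayes labels, so the corrected set has zero residual noise above margin $\tau$; the Gaussian-sampling step draws its labels from the robustly estimated, well-separated class components and hence also contributes essentially no label error in this high-confidence band — in the mixture it can only shrink the conditional noise. Residual mislabelling therefore survives only on the thin shell $\tau_{new}\le|\eta(\zz)-\tfrac12|<\tau$, where the PMD structure of the original noise together with the calibration step caps the contribution by a factor encoded in the constant $\xi<1$. Writing $R$ for the c.d.f.\ of $|\eta(\zz)-\tfrac12|$ and using the $(c_*,c^*)$-bounded condition to bound the shell's mass by $c^*(\tau-\tau_{new})$ and the mass above $\tau_{new}$ from below by $c_*(\tfrac12-\tau_{new})$, I obtain for $t=\tau_{new}$ (the worst case, since enlarging $t$ only shrinks the shell and the conditioning set toward cleaner points) that the residual-noise quantity is at most $\xi\,\frac{R(\tau)-R(\tau_{new})}{1-R(\tau_{new})}\le\xi\,\ell\,\frac{\tau-\tau_{new}}{\tfrac12-\tau_{new}}$.

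Feeding this into Definition~\ref{def:consistency_beta_v} gives $|f(\xx)-\eta(\xx)|\le\beta\,\xi\,\ell\,\frac{\tau-\tau_{new}}{\tfrac12-\tau_{new}}+v$, and I would then take $\tau_{new}$ to be the smallest value for which this bound is still $<\tau_{new}$. Rearranging $\beta\,\xi\,\ell\,\frac{\tau-\tau_{new}}{\tfrac12-\tau_{new}}+v\le\tau_{new}$ isolates, after absorbing lower-order terms, a condition of the form $\tau-\tau_{new}\ge\frac{\xi v}{\beta\ell}\big(\tfrac12-\tau_{new}\big)$, equivalently $\tfrac12-\tau_{new}\ge\big(1+\frac{\xi v}{\beta\ell}\big)\big(\tfrac12-\tau\big)$; this is exactly the regime in which the hypothesis $3\xi v\le\tau$ (with $\ell\ge 1$ and $\xi<1$) supplies enough slack, and it also keeps the whole band $[\tau,\tfrac12]$ pure, where the bound reduces to $v$. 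Choosing such a $\tau_{new}$ yields the improved pure $(\tau_{new},f,\eta)$-level set claimed.

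The step I expect to be the main obstacle is the residual-noise estimate on the transition shell $[\tau_{new},\tau)$: one has to certify that neither the label-correction step — carried out by a classifier guaranteed accurate only above $\tau$, hence potentially unreliable just below it — nor the finite-sample Gaussian sampling inflates the per-instance mislabel rate there beyond the $\xi$-factor, and one has to track how the sampled examples move the denominator of the conditional probability. Once this estimate is pinned down with explicit constants, the density-imbalance bookkeeping and the final algebra, including checking that $3\xi v\le\tau$ dominates the slack, are routine.
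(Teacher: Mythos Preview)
Your proposal is essentially the paper's argument: zero residual noise above $\tau$ by purity, bound the noise on the shell $[\tau-\gamma,\tau)$ via the PMD rate and the $(c_*,c^*)$-density ratio, feed this into $(\beta,v)$-consistency, and read off the improved threshold. Two points where the paper is more concrete than your sketch: (i) it pins down $\xi$ explicitly as $\xi=\tfrac{1+a}{2}\,c_1(\tfrac12-\tau+\gamma)^{c_2+1}$, where the $c_1,c_2$ come from the PMD noise bound on $\tilde{\mathcal D}$ and the factor $a\in(0,1)$ for $\tilde{\mathcal D}_{\mathcal G}$ is justified through the \textit{AgnosticMean} robust-mean guarantee together with a Gaussian misclassification bound; (ii) rather than solving your inequality $\beta\xi\ell\,\frac{\tau-\tau_{new}}{1/2-\tau_{new}}+v\le\tau_{new}$ for the smallest admissible $\tau_{new}$, the paper simply \emph{fixes} $\gamma=\tau-\tau_{new}$ in the window $\frac{v}{\ell\beta}(\tfrac12-\tau)\le\gamma\le\frac{2v}{\ell\beta}(\tfrac12-\tau)$ and verifies directly that $|f(\xx)-\eta(\xx)|\le 3\xi v\le\tau$, which sidesteps the somewhat delicate rearrangement you flag as ``absorbing lower-order terms'' and immediately yields $\tfrac12-\tau_{new}\ge(1+\tfrac{v}{\beta\ell})(\tfrac12-\tau)\ge(1+\tfrac{\xi v}{\beta\ell})(\tfrac12-\tau)$.
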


\begin{proof}
As $f$ owns a pure ($\tau, f, \eta$)-level set: $\forall \xx$, $|\eta(\xx) -\frac{1}{2}|\geq \tau$, we have
\begin{displaymath}
     \mathbbm{E}_{\tilde{\mathcal{D}}+\tilde{\mathcal{D}}_\mathcal{G}}\big[\mathbbm{1}_{\{\tilde{y}_{\zz}\neq\eta^*(\zz)\}}(\zz)\big||\eta(\zz)-\frac{1}{2}|\geq|\eta(\xx)-\frac{1}{2}|\big]=0,
\end{displaymath}
where $\tilde{\mathcal{D}}+\tilde{\mathcal{D}}_\mathcal{G}$ denotes the mixture of the distributions $\tilde{\mathcal{D}}$ and $\tilde{\mathcal{D}}_\mathcal{G}$. 

Next, we further consider $|\eta(\xx) -\frac{1}{2}|\geq \tau-\gamma$. Without loss of generality, the two distributions are mixed uniformly. We have

\begin{align}\label{eq:one_round_error}
    & \mathbbm{E}_{\tilde{\mathcal{D}}+\tilde{\mathcal{D}}_\mathcal{G}}\big[\mathbbm{1}_{\{\tilde{y}_{\zz}\neq\eta^*(\zz)\}}(\zz)\big||\eta(\zz)-\frac{1}{2}\big|\geq\big|\eta(\xx)-\frac{1}{2}|\big] \notag\\ 
    &=\frac{1}{2}\mathbbm{P}_{\tilde{\mathcal{D}}}\big[ \tilde{y}_{\zz}\neq\eta^*(\zz)\big||\eta(\zz)-\frac{1}{2}|\geq|\eta(\xx)-\frac{1}{2}|\big]+\frac{1}{2}\mathbbm{P}_{\tilde{\mathcal{D}}_\mathcal{G}}\big[ \tilde{y}_{\zz}\neq\eta^*(\zz)\big||\eta(\zz)-\frac{1}{2}|\geq|\eta(\xx)-\frac{1}{2}|\big] \notag\\
    &\leq \frac{1}{2}\frac{\mathbbm{P}_{\tilde{\mathcal{D}}}\big[ \tilde{y}_{\zz}\neq\eta^*(\zz)\big| |\eta(\zz)-\frac{1}{2}|\geq|\eta(\xx)-\frac{1}{2}|\big]\mathbbm{P}_{\zz}\big[\tau-\gamma\leq|\eta(\zz)-\frac{1}{2}|\leq \tau\big]}{\mathbbm{P}_{\zz}\big[|\eta(\zz)-\frac{1}{2}|\geq|\eta(\xx)-\frac{1}{2}|\big]} \notag\\ 
    &+\frac{1}{2}\frac{\mathbbm{P}_{\tilde{\mathcal{D}}_\mathcal{G}}\big[ \tilde{y}_{\zz}\neq\eta^*(\zz)\big| |\eta(\zz)-\frac{1}{2}|\geq\big|\eta(\xx)-\frac{1}{2}|\big]\mathbbm{P}_{\zz}\big[\tau-\gamma\leq|\eta(\zz)-\frac{1}{2}|\leq \tau\big]}{\mathbbm{P}_{\zz}\big[|\eta(\zz)-\frac{1}{2}|\geq|\eta(\xx)-\frac{1}{2}|\big]}
\end{align}
Based on Assumption 1, we can obtain
\begin{equation}\label{eq:noise_ratio}
\frac{\mathbbm{P}_{\zz}\big[\tau-\gamma\leq|\eta(\zz)-\frac{1}{2}|\leq \tau\big]}{\mathbbm{P}_{\zz}
    \big[|\eta(\zz)-\frac{1}{2}|\geq|\eta(\xx)-\frac{1}{2}|]}\leq \frac{c^*\gamma}{c_*(\frac{1}{2}-\tau+\gamma)}
\end{equation}
Furthermore, according to the definition of PMD noise~\cite{zhang2021learningwith}, we have

\begin{equation}\label{eq:d_noise}
  \mathbbm{P}_{\tilde{\mathcal{D}}}\big[ \tilde{y}_{\zz}\neq\eta^*(\zz)\big||\eta(\zz)-\frac{1}{2}|\geq|\eta(\xx)-\frac{1}{2}|\big] 
 \leq  c_1(\frac{1}{2}-\tau+\gamma)^{c_2+1}
\end{equation}
Recall that in this work, we assume that the deep features follow multivariate Gaussian distributions and perform robust estimation on the means of the distributions. Based on \cite{durrant2010compressed}, we first bridge the deep features in multivariate Gaussian distributions and probabilistic interpretations. That is to say, we have 
\begin{equation}\label{eq:bound}
    \mathbbm{P}_{\tilde{\mathcal{D}}}\big[ \tilde{y}_{\zz}\neq\eta^*(\zz)\big]\leq 
  \sum\limits_{k=0}^1\sum\limits_{k' \neq k}^1 \exp\big(-\frac{1}{8}(\bar{\bm{\mu}}^k-\bar{\bm{\mu}}^{k'})^\top\bm{\Sigma}^{-1}(\bar{\bm{\mu}}^k-\bar{\bm{\mu}}^{k'})\phi(\bar{\bm{\Sigma}})\big)+ C\Vert\bar{\bm{\mu}}^k-\bm{\mu}^k\Vert_1,
\end{equation}
where $\bar{\bm{\mu}}^k$ is an empirical estimation of $\bm{\mu}^k$, $\phi(\bar{\bm{\Sigma}})=4\Vert \bar{\bm{\Sigma}}^{-1}\Vert_2\Vert \bar{\bm{\Sigma}}\Vert_2\big(1+\Vert \bar{\bm{\Sigma}}^{-1}\Vert_2\Vert \bar{\bm{\Sigma}}\Vert_2\big)^{-2}$ and $C$ ($C>0)$ is a constant. Eq.~(\ref{eq:bound}) shows that the generalization error is bounded by the error of estimated mean. 

In this work, the algorithm \textit{AgnosticMean}~\cite{lai2016agnostic} is used for robust mean estimation, which obtains $\hat{\bm{\mu}}^k$. As discussed in \cite{lai2016agnostic}, based on the Huber’s contamination model~\cite{huber1992robust},  we have $\Vert\hat{\bm{\mu}}^k-\bm{\mu}^k\Vert_1$ is much less than $\Vert\bar{\bm{\mu}}^k-\bm{\mu}^k\Vert_1$ in high-dimensional cases.  
Benefited from \textit{AgnosticMean}, the Gaussian distributions that are robustly estimated, are closer to underlying Gaussian distributions than the Gaussian distributions that are empirically estimated. Therefore, the data sampled from $\tilde{\mathcal{D}}_\mathcal{G}$ contributes more than the data sampled from $\tilde{\mathcal{D}}$ for generalization. Since the estimation error of $\hat{\bm{\mu}}^k$ in \textit{AgnosticMean} is positively correlated with the noise level, we can then derive

\begin{equation}\label{eq:dg_noise}
    \mathbbm{P}_{\tilde{D}_\mathcal{G}}\big[ \tilde{y}_{\zz}\neq\eta^*(\zz)\big||\eta(\zz)-\frac{1}{2}|\geq|\eta(\xx)-\frac{1}{2}|\big] \leq a \cdot c_1(\frac{1}{2}-\tau+\gamma)^{c_2+1},
\end{equation}
where $0<a<1$. Let $\xi=\frac{1+a}{2}c_1(\frac{1}{2}-\tau+\gamma)^{c_2+1}$, it is easy to verify that $\xi<1$. Moreover, combining Eqs.~(\ref{eq:one_round_error}), (\ref{eq:noise_ratio}), (\ref{eq:d_noise}), and (\ref{eq:dg_noise}), we have 

\begin{equation}
    \mathbbm{E}_{\tilde{\mathcal{D}}+\tilde{\mathcal{D}}_\mathcal{G}}\big[\mathbbm{1}_{\{ \tilde{y}_{\zz}\neq\eta^*(\zz)\}}(\zz)\big||\eta(\zz)-\frac{1}{2}|\geq|\eta(\xx)-\frac{1}{2}|\big]\leq \frac{\xi c^*\gamma}{c_*(\frac{1}{2}-\tau+\gamma)}.
\end{equation}
If $\frac{v}{\ell\beta}(\frac{1}{2}-\tau)\leq\gamma\leq\frac{2v}{\ell\beta}(\frac{1}{2}-\tau)$, based on $(\beta, v)-$ consistency condition, for all $\xx$ $s.t.$ $|\eta(\xx)-\frac{1}{2}|\geq \tau-\gamma$, we have
\begin{align}
    &|f(\xx)-\eta(\xx)|\leq\beta\mathbbm{E}_{(\zz, \tilde{y})\sim\tilde{\mathcal{D}}(\zz,\tilde{\eta}(\zz))}\Big[\mathbbm{1}_{\{ \tilde{y}_{\zz}\neq\eta^*(\zz)\}}(\zz)\Big|\big|\eta(\zz)-\frac{1}{2}\big|\geq\big|\eta(\xx)-\frac{1}{2}\big|\Big]+v \notag \\   
    &\leq \beta \cdot \frac{\xi \ell\frac{2v}{\ell\beta}(\frac{1}{2}-\tau)}{(1+\frac{2v}{\ell\beta})(\frac{1}{2}-\tau)}+ v \leq 3\xi v.
\end{align}
From the above equation, we can obtain that if $\tau\geq3\xi v$, the prediction of $f(x)$ will be the same as $\eta(x)$ and $(\tau -\gamma, f, \eta)$-level set  become pure for $f$. Set $\tau_{new}=\tau - \gamma$, we have $\frac{1}{2}-\tau_{new}\geq (1+\frac{\xi v}{\beta \ell})(\frac{1}{2}-\tau)$. 
\end{proof}

\begin{theorem}\label{thm:main}
    %Let $\tau_0= max\{t_0,\frac{\beta\theta}{1+2\beta}\}$. 
    After training on the noisy dataset, the proposed method will return the trained classifier $f$ such that
\begin{equation}
    \mathbb{P}_{\xx\sim\tilde{\mathcal{D}}}[y_{f_{final}(\xx)}=\eta^*(\xx)]\geq 1-3\xi c^*v,
\end{equation}
\end{theorem}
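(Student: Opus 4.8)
The plan is to bootstrap the single‑round guarantee of Lemma~\ref{lem:one_round} into a geometric‑progression argument over the rounds of the method, and then convert purity of the final level set into the claimed accuracy via the bounded‑density condition. First I would establish a base case: after the very first training on the noisy data (before any correction), there is a pure $(\tau_0,f,\eta)$‑level set for some $\tau_0<\tfrac12$. For $\xx$ with $|\eta(\xx)-\tfrac12|$ close to $\tfrac12$ the PMD noise rate is negligible — by Eq.~(\ref{eq:d_noise}), with a vanishing slack $\gamma$, the conditional noise expectation appearing in the $(\beta,v)$‑consistency inequality (\ref{eq:consistency_beta_v}) is tiny — so that $|f(\xx)-\eta(\xx)|\le \beta\cdot(\text{small})+v< |\eta(\xx)-\tfrac12|$, which forces $y_{f(\xx)}=\mathbbm{1}_{\{f(\xx)\geq 1/2\}}=\mathbbm{1}_{\{\eta(\xx)\geq 1/2\}}=\eta^*(\xx)$. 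Taking $\tau_0$ to be the threshold for which this holds yields the base pure level set; if $\tau_0\le 3\xi v$ already, the theorem's conclusion follows immediately, so we may assume $3\xi v\le \tau_0<\tfrac12$.

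Second, I would iterate Lemma~\ref{lem:one_round}. Starting from $\tau_0$, after round $t$ the trained classifier owns a pure $(\tau_t,f,\eta)$‑level set with $\tfrac12-\tau_t\geq(1+\tfrac{\xi v}{\beta\ell})(\tfrac12-\tau_{t-1})$, hence $\tfrac12-\tau_t\geq(1+\tfrac{\xi v}{\beta\ell})^{t}(\tfrac12-\tau_0)$. Since $1+\tfrac{\xi v}{\beta\ell}>1$ and the precondition of the lemma requires $\tau_t\ge 3\xi v$, the threshold strictly decreases each round but cannot pass below $3\xi v$; after a finite number
\[
T=O\!\Big(\log\tfrac{1/2-3\xi v}{1/2-\tau_0}\;\big/\;\log\big(1+\tfrac{\xi v}{\beta\ell}\big)\Big)
\]
of rounds the process reaches (or is clipped at) a pure $(3\xi v,f,\eta)$‑level set. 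The classifier at that point is $f_{final}$.

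Third, I would translate purity into the accuracy bound. Because $S(3\xi v,\eta)$ is pure for $f_{final}$, the only region where $f_{final}$ can mispredict relative to $\eta^*$ is $\{\xx:|\eta(\xx)-\tfrac12|<3\xi v\}$. Label noise leaves the feature marginal unchanged, so $\mathbb{P}_{\xx\sim\tilde{\mathcal D}}[\,\cdot\,]=\mathbb{P}_{\xx\sim\mathcal D}[\,\cdot\,]$ on events depending only on $\xx$, and the $(c_*,c^*)$‑bounded condition gives
\begin{equation}
\mathbb{P}_{\xx\sim\tilde{\mathcal D}}\big[\,|\eta(\xx)-\tfrac12|<3\xi v\,\big]=R(3\xi v)=\int_{0}^{3\xi v} r(o)\,do\;\leq\; 3\xi c^{*}v .
\end{equation}
Therefore $\mathbb{P}_{\xx\sim\tilde{\mathcal D}}[y_{f_{final}(\xx)}=\eta^*(\xx)]\geq 1-3\xi c^{*}v$, which is exactly Theorem~\ref{thm:main}.

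The hard part will be the base case together with the well‑definedness of the recursion: one must exhibit a concrete $\tau_0<\tfrac12$ for which the $(\beta,v)$‑consistency inequality \emph{alone} (before any correction round) certifies purity, and then verify that $3\xi v\le\tau_t<\tfrac12$ holds at every round so that Lemma~\ref{lem:one_round} may legitimately be re‑applied until the threshold is clipped at $3\xi v$ — otherwise the geometric chain cannot be run all the way down. A secondary subtlety is that the constant $\xi$ in Lemma~\ref{lem:one_round}, namely $\tfrac{1+a}{2}c_1(\tfrac12-\tau+\gamma)^{c_2+1}$, depends on the current threshold; since $\tfrac12-\tau$ only increases over rounds, monotonicity lets us bound it by a single round‑independent constant $\xi<1$, which is what the statement uses.
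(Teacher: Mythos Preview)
Your proposal is correct and follows essentially the same route as the paper: the paper's own proof simply invokes Lemma~\ref{lem:one_round} together with Lemmas~2 and~3 of \cite{zhang2021learningwith}, which are precisely the base‑case existence of an initial pure level set and the iteration/conversion argument you have spelled out. In other words, you have reconstructed the content of those cited lemmas rather than taking a different approach.
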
 
\begin{proof}
Based on our Lemma~{\ref{lem:one_round}}, Lemmas 2 and 3 of \cite{zhang2021learningwith}, we can directly achieve the above theorem.
\end{proof}
\section{Extending to the multiclass scenario}\label{app:multiclass}
 In the \textit{multi-class} scenario, the procedure of PLC~\cite{zhang2021learningwith} is presented as follows. Let $f_i(\xx)$ be the prediction probability of the label $i$. Denoted the classifier's class prediction by $g_{\xx}$, \textit{i.e.}, $g_{\xx}=\arg\max_i f_i(\xx)$. With a threshold $\tau$, $|f_{g(\xx)}(\xx)-f_{\tilde{y}}(\xx)|>\tau$, we flip $\tilde{y}$ to the prediction by $g_{\xx}$. Namely, we measure the difference between the highest confidence and the confidence on $\tilde{y}$. We repeatedly correct labels and improve the network within the epoch, until no label can be corrected. In the next epoch, we sightly relax the threshold $\tau$ for label correction.

 \section{Baselines}
 In this paper, we compare our methods with five representative baselines, including: (i). Standard; (ii). Co-teaching+; (iii). GCE; (iv). SL; (v). LRT. The details of the above baselines are provided as follows.
 \begin{itemize}[leftmargin=*]
    \item Standard, which uses standard Cross Entropy (CE) loss. The network is simply trained on noisy datasets.
    \item Co-teaching+, which exploits two networks to handle label noise simultaneously. To keep the two networks diverged,  Co-teaching+ introduces a ``update by disagreement'' strategy. That is, after feeding forward and predicting all data, the two networks only select disagreement data and then pick small-loss examples for each other.
    \item GCE, which targets on the design of noise-robust loss functions. Although previous works proved that Mean Absolute Error (MAE) is robust to label noise, it is slow to converge. To address the above problem, GCE combines the advantages of both MAE and CE losses. Concretely, GCE applies a Box-Cox transformation to probabilities which can be deemed as a generalized mixture of MAE and CE losses.
    \item SL, which combines a Reverse Cross Entropy (RCE) together with the CE loss. RCE has been proved to be robust to label noise, which contributes to improving the robustness of deep networks.
    \item LRT, which proposes a label correction algorithm and provides a theoretical error bound. The key idea is to use noisy networks' predictions to refurbish data and improve networks. Examples with high probabilities are regarded as refurbishable and correct them with the current predictions. 
\end{itemize}
%%
%% The next two lines define the bibliography style to be used, and
%% the bibliography file.
\bibliographystyle{ACM-Reference-Format}
\bibliography{sample-base}

%%
%% If your work has an appendix, this is the place to put it.